\def\ifundefined#1{\expandafter\ifx\csname#1\endcsname\relax}
\let\arXiv = 1
\newcommand{\titledParagraph}[1]{\needspace{1\baselineskip}\noindent {\bf #1}}
\newcommand{\ucmathlist}{%
    \def\alpha{\mathrm{A}}%
    \def\beta{\mathrm{B}}%
    \let\gamma=\Gamma
    \let\delta=\Delta
    \def\epsilon{\mathrm{E}}%
    \def\varepsilon{\mathrm{E}}%
    \def\zeta{\mathrm{Z}}%
    \def\eta{\mathrm{H}}%
    \let\theta=\Theta
    \let\vartheta=\Theta
    \def\iota{\mathrm{I}}%
    \def\kappa{\mathrm{K}}%
    \let\lambda=\Lambda
    \def\mu{\mathrm{M}}%
    \def\nu{\mathrm{N}}%
    \let\xi=\Xi
    \let\pi=\Pi
    \let\varpi=\Pi
    \def\rho{\mathrm{P}}%
    \def\varrho{\mathrm{P}}%
    \let\sigma=\Sigma
    \def\tau{\mathrm{T}}%
    \let\upsilon=\Upsilon
    \let\phi=\Phi
    \let\varphi=\Phi
    \def\chi{\mathrm{X}}%
    \let\psi=\Psi
    \let\omega=\Omega
}
\theoremstyle{plain}
    \newtheorem{theorem}{Theorem}
    \newtheorem{proposition}[theorem]{Proposition}
    \newtheorem{lemma}[theorem]{Lemma}
    \newtheorem{corollary}[theorem]{Corollary}
\theoremstyle{definition}
\def\renewtheorem#1{%
    \expandafter\let\csname#1\endcsname\relax
    \expandafter\let\csname c@#1\endcsname\relax
    \gdef\renewtheorem@envname{#1}
    \renewtheorem@secpar
}
\def\renewtheorem@secpar{\@ifnextchar[{\renewtheorem@numberedlike}{\renewtheorem@nonumberedlike}}
\def\renewtheorem@numberedlike[#1]#2{\newtheorem{\renewtheorem@envname}[#1]{#2}}
\def\renewtheorem@nonumberedlike#1{  
    \def\renewtheorem@caption{#1}
    \edef\renewtheorem@nowithin{\noexpand\newtheorem{\renewtheorem@envname}{\renewtheorem@caption}}
    \renewtheorem@thirdpar
}
\def\renewtheorem@thirdpar{\@ifnextchar[{\renewtheorem@within}{\renewtheorem@nowithin}}
\def\renewtheorem@within[#1]{\renewtheorem@nowithin[#1]}
    \newcommand\blfootnote[1]{%
        \begingroup
        \renewcommand\thefootnote{}\footnote{#1}%
        \addtocounter{footnote}{-1}%
        \endgroup
    }
\def\munderbar#1{\underline{\sbox\tw@{$#1$}\dp\tw@\z@\box\tw@}}
\def\mtfnH{\boldsymbol{H}} 
\def\vcfna{\boldsymbol{a}}
\def\vcfnth{\boldsymbol{\tilde{h}}}
\def\thetitle{Node-Variant Graph Filters in Graph Neural Networks}
    \title{\MakeUppercase{\thetitle}}
    \title{\thetitle}
    \name{Fernando Gama$^{\ast}$, Brendon G. Anderson$^{\dag}$, Somayeh Sojoudi$^{\dag}$}
    \address{$\ast$ Department of Electrical and Computer Engineering, Rice University, Houston, TX \\
        $\dag$ Department of Electrical Engineering and Computer Sciences, University of California, Berkeley, CA
    \thanks{This work is partially supported by NSF and ONR.}}
    \author{Fer\hspace{0.015cm}nando Gama, Brendon G. Anderson, Somayeh Sojoudi}
    \date{}
\begin{document}

\ifundefined{arXiv}
    \ninept
\else
\fi


\maketitle


%
\begin{abstract}
Graph neural networks (GNNs) have been successfully employed in a myriad of applications involving graph signals. Theoretical findings establish that GNNs use nonlinear activation functions to create low-eigenvalue frequency content that can be processed in a stable manner by subsequent graph convolutional filters. However, the exact shape of the frequency content created by nonlinear functions is not known and cannot be learned. In this work, we use node-variant graph filters (NVGFs) --which are linear filters capable of creating frequencies-- as a means of investigating the role that frequency creation plays in GNNs. We show that, by replacing nonlinear activation functions by NVGFs, frequency creation mechanisms can be designed or learned. By doing so, the role of frequency creation is separated from the nonlinear nature of traditional GNNs. Simulations on graph signal processing problems are carried out to pinpoint the role of frequency creation.
\end{abstract}
%

\ifundefined{arXiv}
    \vspace{-0.2cm}
\else
    \blfootnote{This work is partially supported by NSF and ONR. F. Gama is with the Dept. of Electrical and Computer Engineering, Rice University, Houston, TX. Email: \url{fgama@rice.edu}. B. G. Anderson and S. Sojoudi are with the Dept. of Electrical Engineering and Computer Sciences, University of California, Berkeley, CA.}
\fi

\section{Introduction} \label{sec:intro}

\ifundefined{arXiv}
    \vspace{-0.2cm}
\fi



\noindent Graph neural networks (GNNs) \cite{Bronstein2017-GeometricDL, Gama2020-GNN} are learning architectures that have been successfully applied to a wide array of graph signal processing (GSP) problems ranging from recommendation systems \cite{Monti2017-RecommendationGNN, Levie2018-CayleyNets} and authorship attribution \cite{Segarra2015-Authorship, Gama2019-Archit}, to physical network problems including wireless communications \cite{Eisen2020-WirelessEGNN}, control \cite{Gama2021-LQRGNN}, and sensor networks \cite{Owerko2018-Sensor}. In the context of GSP, frequency analysis has been successful at providing theoretical insight into the observed success of GNNs \cite{Gama2020-Stability, Kenlay2021-StabilityRewiring, Levie2020-TransferabilitySpectral} and graph scattering transforms \cite{Wolf2019-GeometricScattering, Lerman2020-Scattering}.

Of particular interest is the seminal work by Mallat \cite{Mallat2012-Scattering, Bruna2013-Scattering}, concerning discrete-time signals and images, that argues that the improved performance of convolutional neural networks (CNNs) over linear convolutional filters is due to the activation functions. Concretely, nonlinear activation functions allow for high-frequency content to be spread into lower frequencies, where it can be processed in a stable manner---a feat that cannot be achieved by convolutional filters alone\ifundefined{arXiv}.\else, which are unstable when processing high frequencies.\fi
Leveraging the notion of graph Fourier transform (GFT) \cite{Sandryhaila2014-DSPGfreq}, these results have been extended to GNNs \cite{Gama2020-Stability}, establishing that the use of functions capable of creating low-eigenvalue frequency content allows them to be robust to changes in the graph topology, facilitating scalability and transferability \cite{Ruiz2021-GNNs}.

While nonlinear activation functions play a key role in the creation of low-eigenvalue frequency content, it is not possible to know the exact shape in which this frequency content is actually generated. Node-variant graph filters (NVGFs) \cite{Segarra2017-GraphFilterDesign}, which essentially assign a different filter tap to each node in the graph, are also able to generate frequency content. Different from nonlinear activation functions, this frequency content can be exactly computed, given the filter taps. Thus, by learning or carefully designing these filter taps, it is possible to know exactly how the frequency content is being generated.

The NVGF is a linear filter, which means that replacing the nonlinear activation functions with NVGFs actually renders the whole architecture a linear one. Therefore, by comparing the performance of this architecture to that of a traditional GNN, it is possible to isolate the role of frequency creation in the overall performance of the architecture, from that of the nonlinear nature of mappings. The contributions of this paper can be summarized as follows:

\ifundefined{arXiv}
\begin{list}{}{
        \setlength{\labelwidth}{20pt}
        \setlength{\leftmargin}{2pt}
        \setlength{\labelsep}{1pt}
        \setlength{\itemsep}{-1pt}
        \setlength{\topsep}{-1pt}
        \setlength{\parskip}{-1pt}
    }

    \item[1.] We introduce NVGFs as a means of replacing nonlinear activation functions, motivated by their ability to create frequencies. We obtain closed-form expressions for the frequency response of NVGFs as a function of the filter taps.
    \item[2.] We prove that NVGFs are Lipschitz continuous with respect to changes in the underlying graph topology.
    \item[3.] We put forth a framework for designing NVGFs.
    \item[4.] We propose a GNN architecture where the nonlinear activation function is replaced by a NVGF. The filter taps of the NVGF can be either learned or designed. The resulting architecture decouples the role of frequency creation from the nonlinear nature of the GNN.
    \item[5.] We investigate the problem of authorship attribution to demonstrate, both quantitatively and qualitatively, the role of frequency creation in the performance of a GNN, and its relationship to the nonlinear nature of traditional architectures.

\end{list}
\else
\begin{list}{}{
        \setlength{\labelwidth}{20pt}
        \setlength{\leftmargin}{17.5pt}
        \setlength{\labelsep}{1.5pt}
        \setlength{\itemsep}{-1pt}
        \setlength{\topsep}{0pt}
        \setlength{\parskip}{0pt}
    }

    \item[\textbf{(C1)}] We introduce NVGFs as a means of replacing nonlinear activation functions, motivated by their ability to create frequencies. We obtain closed-form expressions for the frequency response of NVGFs as a function of the filter taps.
    \item[\textbf{(C2)}] We prove that NVGFs are Lipschitz continuous with respect to changes in the underlying graph topology.
    \item[\textbf{(C3)}] We put forth a framework for designing NVGFs.
    \item[\textbf{(C4)}] We propose a GNN architecture where the nonlinear activation function is replaced by a NVGF. The filter taps of the NVGF can be either learned from data (Learn NVGF) or obtained by design (Design NVGF). The aim of the resulting architecture is to decouple the role of frequency creation from the nonlinear nature of the GNN.
    \item[\textbf{(C5)}] We investigate the problem of authorship attribution to demonstrate, both quantitatively and qualitatively, the role of frequency creation in the performance of a GNN, and its relationship to the nonlinear nature of traditional architectures.

\end{list}
\fi

In essence, we show that nonlinear activation functions are not strictly required for creating frequencies, as originally thought in \cite{Mallat2012-Scattering, Gama2020-Stability}, but that linear NVGF activation functions are sufficient. Furthermore, we demonstrate that this frequency content can be learned with respect to the specific problem under study. \ifundefined{arXiv}All proofs, as well as the code and further simulations, can be found online\footnote{Proofs: \url{http://arxiv.org/abs/2106.00089} \\ \indent $\ \, \, $ Code: \url{http://github.com/fgfgama/nvgf}}\else All proofs, as well as further simulations, can be found in the appendices. Code can be found online at \url{http://github.com/fgfgama/nvgf}.\fi



\titledParagraph{Related work.} GNNs constitute a very active area of research \cite{Rey2019-EncoderDecoder, Rey2021-Overparametrized}. From a GSP perspective, spectral filtering is used in \cite{Bruna2014-SpectralGNN}, it is then replaced by computationally simpler Chebyshev polynomials \cite{Defferrard2016-ChebNets}, and subsequently by general graph convolutional filters \cite{Gama2019-Archit}. GNNs were also adopted in non-GSP problems \cite{Kipf2017-GCN, Weinberger2019-SGC, Velickovic2018-GAT}. The proposed replacement of nonlinear activation functions by NVGFs creates a linear architecture that uses both convolutional and non-convolutional graph filters.

NVGFs are first introduced in \cite{Segarra2017-GraphFilterDesign} to extend time-variant filters into the realm of graph signals. In that work, NVGFs are used to optimally approximate linear operators in a distributed manner. In this paper, we focus on the frequency response of NVGFs and on their capacity to create frequency content.

Leveraging the notion of GFT, \cite{Gama2020-Stability} shows that a GNN is Lipschitz continuous to small changes in the underlying graph structure. Likewise, frequency analysis has been used to understand graph scattering transforms, where the filters used in the GNN are chosen to be wavelets (and not learned) \cite{Wolf2019-GeometricScattering, Lerman2020-Scattering}.
In this paper, we focus on the role of frequency creation that is put forth in \cite{Mallat2012-Scattering, Gama2020-Stability}, and study NVGFs as linear mechanisms for achieving this.


\section{The Node-Variant Graph Filter} \label{sec:NVGF}



Let $\stG = (\stV, \stE)$ be an undirected, possibly weighted, graph with a set of $N$ nodes $\stV = \{\lmv_{1},\ldots,\lmv_{N}\}$ and a set of edges $\stE \subseteq \stV \times \stV$. Define a graph signal as the function $\fnx: \stV \to \fdR$ that associates a scalar value to each node. For a fixed ordering of the nodes, the graph signal can be conveniently described by means of a vector $\vcx \in \fdR^{N}$ such that the $i^{\text{th}}$ entry $[\vcx]_{i}$ is the value $x_{i}$ of the signal on node $\lmv_{i}$, i.e., $[\vcx]_{i} = x_{i} = \fnx(\lmv_{i})$. 

Describing a graph signal as the vector $\vcx \in \fdR^{N}$ is mathematically convenient but carries no information about the underlying graph topology that supports it. This information can be recovered by defining a graph matrix description (GMD) $\mtS \in \fdR^{N \times N}$ which is a matrix that respects the sparsity pattern of the graph, i.e., $[\mtS]_{ij} = 0$ for all distinct indices $i$ and $j$ such that $(\lmv_{j},\lmv_{i}) \notin \stE$. Examples of GMDs widely used include the adjacency matrix, the Laplacian matrix, and their corresponding normalizations \cite{Sandryhaila2013-DSPG, Shuman2013-SPG, Ortega2018-GSP}.

The GMD $\mtS$ can thus be leveraged to process the graph signal $\vcx$ in such a way that the underlying graph structure is exploited. The most elementary example is the linear map $\fnS: \fdR^{N} \to \fdR^{N}$ between graph signals given by $\vcy = \fnS(\vcx) = \mtS \vcx$. This linear map is a linear combination of the information located in the one-hop neighborhood of each node:
\begin{equation} \label{eq:graphShiftSingle}
    y_{i} = [\vcy]_{i} = \sum_{j=1}^{N} [\mtS]_{ij} [\vcx]_{j} = \sum_{j : \lmv_{j} \in \stN_{i} \cup \{\lmv_{i}\}} s_{ij}x_{j}
\end{equation}
where $\stN_{i} = \{\lmv_{j}: (\lmv_{j},\lmv_{i}) \in \stE \}$ is the set of nodes that share an edge with node $\lmv_{i}$. The last equality in the above equation is due to the sparsity pattern of the GMD $\mtS$.

More generally, a graph filter $\fnH(\cdot;\mtS): \fdR^{N} \to \fdR^{N}$ is a mapping between graph signals that leverages the structure encoded in $\mtS$ \cite{Segarra2017-GraphFilterDesign}. In particular, linear shift-invariant graph filters (LSIGFs) are those that can be built as a polynomial in $\mtS$:
\begin{equation} \label{eq:LSIfilter}
    \fnH^{\text{lsi}}(\vcx;\mtS) = \sum_{k=0}^{K} h_{k} \mtS^{k} \vcx = \mtfnH^{\text{lsi}}(\mtS) \vcx
\end{equation}
where $\mtfnH^{\text{lsi}}(\mtS) = \sum_{k=0}^{K} h_{k} \mtS^{k}$ with $h_{k} \in \fdR$. Note that $\fnH^{\text{lsi}}(\vcx;\mtS)$ is written in the form of $\mtfnH^{\text{lsi}}(\mtS)\vcx$ to emphasize that the function is linear in the input $\vcx$, i.e., $\vcx$ is multiplied by a matrix $\mtfnH^{\text{lsi}}(\mtS)$ that is parametrized by $\mtS$. LSIGFs inherit their name from the fact that they satisfy the property that $\mtfnH^{\text{lsi}}(\mtS)\mtS \vcx = \mtS \mtfnH^{\text{lsi}}(\mtS) \vcx$. The set of polynomial coefficients $\{h_{k}\}_{k=0}^{K}$ are called filter taps, and can be collected in a vector $\vch \in \fdR^{K+1}$ defined as $[\vch]_{k+1} = h_{k}$ for all $k \in \{0,\ldots,K\}$. Note that the term $\mtS^{k}\vcx$ is a convenient mathematical formulation, but in practice $\mtS^{k}\vcx$ is computed by exchanging information $k$ times with one-hop neighbors, i.e., there are no matrix powers involved. In general, GSP regards $\mtS$ as given by the structure of the problem, and regards $\vcx$ as the actionable data \cite{Sandryhaila2013-DSPG, Shuman2013-SPG, Ortega2018-GSP}.

This paper focuses on NVGFs \cite{Segarra2017-GraphFilterDesign}, which are linear filters that assign a different filter tap to each node, for each application of $\mtS$. This can be compactly written as follows:
\begin{equation} \label{eq:NVGF}
    \fnH^{\text{nv}}(\vcx;\mtS) = \sum_{k=0}^{K} \diag(\vch^{(k)}) \mtS^{k} \vcx = \mtfnH^{\text{nv}}(\mtS) \vcx
\end{equation}
where $\vch^{(k)} \in \fdR^{N}$ and $\diag(\cdot)$ is the diagonal operator that takes a vector and creates a diagonal matrix with the elements of the vector in the diagonal. Since the NVGF is linear in the input, it holds that $\fnH^{\text{nv}}(\vcx;\mtS) = \mtfnH^{\text{nv}}(\mtS) \vcx$, where $\mtfnH^{\text{nv}}(\mtS) = \sum_{k=0}^{K} \diag(\vch^{(k)}) \mtS^{k}$. The $i^{\text{th}}$ entry of the vector, $[\vch^{(k)}]_{i} = h_{ik}$, is the filter tap that node $\lmv_{i}$ uses to weigh the information incoming after $k$ exchanges with its neighbors. The set of all filter taps can be conveniently collected in a matrix $\mtH \in \fdR^{N \times (K+1)}$ where the $(k+1)^{\text{th}}$ column is $\vch^{(k)} \in \fdR^{N}$ and the $i^{\text{th}}$ row, denoted by $\vch_{i} \in \fdR^{K+1}$, contains the $K+1$ filter taps used by node $\lmv_{i}$, i.e., $[\vch_{i}]_{k+1} = h_{ik}$. 

The LSIGF in \eqref{eq:LSIfilter} and the NVGF in \eqref{eq:NVGF} are both linear and local processing operators. They depend linearly on the input graph signal $\vcx$ as indicated by the matrix multiplication notation in \eqref{eq:LSIfilter} and in \eqref{eq:NVGF}. They are local in the sense that, to compute the output, each node requires information relayed directly by their immediate neighbors. The LSIGF is characterized by the collection of $K+1$ filter taps. The NVGF, on the other hand, is characterized by $N(K+1)$ filter taps. It is noted that while the NVGF requires additional memory to store more parameters, this can be distributed throughout the graph. It is also observed that both the LSIGF and the NVGF have the same computational complexity.



\section{Frequency Analysis} \label{sec:freq}



The GMD $\mtS \in \fdR^{N \times N}$ can be used to define a spectral representation of the graph signal $\vcx \in \fdR^{N}$ \cite{Sandryhaila2014-DSPGfreq}. Since the graph is undirected, assume that $\mtS$ is symmetric so that it can be diagonalized by an orthonormal basis of eigenvectors $\{\vcv_{i}\}_{i=1}^N$, where $\vcv_{i} \in \fdR^{N}$ and $\mtS \vcv_{i} = \lambda_{i} \vcv_{i}$, with $\lambda_{i} \in \fdR$ being the corresponding eigenvalue. Then, it holds that $\mtS = \mtV \diag(\vclambda) \mtV^{\Tr}$, where the $i^{\text{th}}$ column of $\mtV$ is $\vcv_{i}$ and where $\vclambda \in \fdR^{N}$ is given by $[\vclambda]_{i} = \lambda_{i}$ for $i=1,\ldots,N$. We assume throughout this paper that the eigenvalues are distinct, which is typically the case for random connected graphs.

The spectral representation of a graph signal $\vcx$ with respect to its underlying graph support described by $\mtS$ is given by
\begin{equation} \label{eq:GFT}
    \vctx = \vcV^{\Tr} \vcx
\end{equation}
where $\vctx \in \fdR^{N}$, see \cite{Sandryhaila2014-DSPGfreq}. The spectral representation $\vctx$ of the graph signal $\vcx$ contains the coordinates of representing $\vcx$ on the eigenbasis $\{\vcv_{i}\}_{i=1}^N$ of the support matrix $\mtS$. The resulting vector $\vctx$ is known as the frequency response of the signal $\vcx$. The $i^{\text{th}}$ entry $[\vctx]_{i} = \vcv_{i}^{\Tr} \vcx = \sctx_{i} \in \fdR$ of the frequency response $\vctx$ measures how much the $i^{\text{th}}$ eigenvector $\vcv_{i}$ contributes to the signal $\vcx$. The operation in \eqref{eq:GFT} is called the GFT, and thus the frequency response $\vctx$ is often referred to as the GFT of the signal $\vcx$.

The GFT offers an alternative representation of the graph signal $\vcx$ that takes into account the graph structure in $\mtS$. The effect of a filter can be characterized in the spectral domain by computing the GFT of the output. For instance, when considering a LSIGF, the spectral representation of the output $\vcy = \mtfnH^{\text{lsi}}(\mtS) \vcx$ is
\begin{equation} \label{eq:GFToutputLSI}
    \vcty = \mtV^{\Tr} \sum_{k=0}^{K} h_{k} \mtS^{k} \vcx = \sum_{k=0}^{K} h_{k} \diag(\vclambda^{k}) \vctx = \diag(\vcth) \vctx
\end{equation}
where the eigendecomposition of $\mtS$, the GFT of $\vcx$, and the fact that $\mtS^{k} = \mtV \diag(\vclambda^{k}) \mtV^{\Tr}$ were all used. Note that $\vclambda^{k}$ is a shorthand notation that means $[\vclambda^{k}]_{i} = \lambda_{i}^{k}$. The vector $\vcth \in \fdR^{N}$ is known as the frequency response of the filter and its $i^{\text{th}}$ entry is given by
\begin{equation} \label{eq:freqResponse}
    [\vcth]_{i} = \fnth(\lambda_{i}) = \sum_{k=0}^{K} h_{k} \lambda_{i}^{k}
\end{equation}
where $\fnth:\fdR \to \fdR$ is a polynomial defined by the set of filter taps $\{h_{k}\}_{k=0}^K$. The function $\fnth(\cdot)$ depends only on the filter coefficients and not on the specific graph on which it is applied, and thus is valid for all graphs. The effect of filtering on a specific graph comes from instantiating $\fnth(\cdot)$ on the specific eigenvalues of that graph. The function $\fnth(\cdot)$ is denoted as the frequency response as well, and it will be clear from context whether we refer to the function $\fnth(\cdot)$ or to the vector $\vcth$ given in \eqref{eq:freqResponse}\ifundefined{arXiv}\else\
that stems from evaluating $\fnth(\cdot)$ at each of the $N$ eigenvalues $\lambda_{i}$\fi
. Note that since $\fnth(\cdot)$ is an analytic function, it can be applied to the square matrix $\mtS$ so that $\fnth(\mtS) = \mtfnH^{\text{lsi}}(\mtS)$.

In the case of the LSIGF, it is observed from \eqref{eq:GFToutputLSI} that the $i^{\text{th}}$ entry of the frequency response of the output $[\vcty]_{i} = \scty_{i}$ is given by the elementwise multiplication
\begin{equation} \label{eq:convolutionTheorem}
    \scty_{i} = \fnth(\lambda_{i}) \sctx_{i}.
\end{equation}
This implies that the frequency response of the output $\scty_{i}$ is the elementwise multiplication of the frequency response of the filter $\fnth(\lambda_{i})$ and the frequency response of the input $\sctx_{i}$.
This makes \eqref{eq:convolutionTheorem} the analogue of the convolution theorem for graph signals. Therefore, oftentimes the LSIGF in \eqref{eq:LSIfilter} is called graph convolution. It is observed from \eqref{eq:convolutionTheorem} that LSIGFs are capable of learning any type of frequency response (low-pass, high-pass, etc.), but that they are not able to create frequencies, i.e., if $\sctx_{i} = 0$, then $\scty_{i} = 0$.

Unlike discrete-time signals, the frequency response of the LSIGF is not computed in the same manner as the frequency response of graph signals. More specifically, the frequency response of the filter $\vcth$ can be directly computed from the filter taps $\vch$ by means of a Vandermonde matrix $\mtLambda \in \fdR^{N \times (K+1)}$ given by $[\mtLambda]_{ik} = \lambda_{i}^{k-1}$ as follows:
\begin{equation} \label{eq:GFTfilter}
    \vcth = \mtLambda \vch.
\end{equation}
This implies that\ifundefined{arXiv} \else\ when processing graph signals, the graph filter cannot be uniquely represented by a graph signal \cite[Th. 5]{Sandryhaila2013-DSPG}, and thus the concept of impulse response is no longer valid. Furthermore, \fi
the graph convolution is not \ifundefined{arXiv}commutative.\else symmetric, i.e., the filter and the graph signal do not commute. Finally, it is interesting to note that, when $\mtS$ is the adjacency matrix of a directed cycle and $K=N-1$, then $\mtLambda = \mtV^{\Hr}$ and the GFT of the signal and of the filter taps (in time, known as the impulse response) becomes equivalent, as expected.\fi

When considering NVGFs, as in \eqref{eq:NVGF}, the convolution theorem \eqref{eq:convolutionTheorem} no longer holds. Instead, the frequency response of the output is given in the following proposition.

\begin{proposition}[Frequency response of NVGF] \label{prop:GFToutputNVGF}
Let $\vcy = \mtfnH^{\text{nv}}(\mtS) \vcx = \sum_{k=0}^{K} \diag(\vch^{(k)}) \mtS^{k} \vcx$ be the output of an arbitrary NVGF characterized by some filter tap matrix $\mtH \in \fdR^{N \times(K+1)}$. Then, the frequency response $\vcty = \mtV^{\Tr}\vcy$ of the output is given by
\begin{equation} \label{eq:GFToutputNVGF}
    \vcty = \mtV^{\Tr} \big( \mtV \circ (\mtH \mtLambda^{\Tr}) \big) \vctx
\end{equation}
where $\circ$ denotes elementwise product of matrices.
\end{proposition}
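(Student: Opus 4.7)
The plan is to expand the NVGF output using the eigendecomposition of $\mtS$, commute the resulting diagonal factors appropriately, and then identify the emerging matrix entrywise with the Hadamard product form on the right-hand side. Since the statement is essentially a book-keeping identity about matrix products involving diagonal factors, the main work is careful index tracking rather than any inequality or nontrivial machinery.

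First I would substitute $\mtS^k = \mtV \diag(\vclambda^k) \mtV^{\Tr}$ into the definition of the NVGF and pull the GFT $\mtV^{\Tr}\vcx = \vctx$ out of the sum, obtaining
\begin{equation*}
    \vcty = \mtV^{\Tr} \vcy = \mtV^{\Tr} \left( \sum_{k=0}^{K} \diag(\vch^{(k)}) \, \mtV \, \diag(\vclambda^k) \right) \vctx.
\end{equation*}
Note that the diagonal matrix $\diag(\vch^{(k)})$ sits on the \emph{left} of $\mtV$ and therefore cannot be folded into a single spectral factor as is done for LSIGFs in \eqref{eq:GFToutputLSI}; this is precisely the reason why NVGFs are not shift-invariant and why the convolution theorem \eqref{eq:convolutionTheorem} fails.

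Next I would compute the bracketed matrix entrywise. For indices $(i,j)$, the $(i,j)$ entry of $\diag(\vch^{(k)}) \, \mtV \, \diag(\vclambda^k)$ equals $h_{ik}\,[\mtV]_{ij}\,\lambda_j^k$, since the left diagonal scales the $i$-th row by $h_{ik}$ and the right diagonal scales the $j$-th column by $\lambda_j^k$. Summing over $k$ yields
\begin{equation*}
    [\mtV]_{ij} \sum_{k=0}^{K} h_{ik}\,\lambda_j^k = [\mtV]_{ij}\,[\mtH \mtLambda^{\Tr}]_{ij},
\end{equation*}
where the last equality uses the definition of the Vandermonde matrix $[\mtLambda]_{jk} = \lambda_j^{k-1}$ together with the stacking $[\mtH]_{i,k+1} = h_{ik}$. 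This is exactly the $(i,j)$ entry of the Hadamard product $\mtV \circ (\mtH \mtLambda^{\Tr})$, so the bracketed sum equals $\mtV \circ (\mtH \mtLambda^{\Tr})$ and the desired identity \eqref{eq:GFToutputNVGF} follows by reassembling.

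The only subtle point, and the main place I would slow down, is the alignment between the zero-based indexing used for filter taps and powers of $\mtS$ versus the one-based indexing of the rows/columns of $\mtH$ and $\mtLambda$; a clean statement of this indexing convention at the outset avoids an off-by-one error in the Vandermonde identification. Aside from this bookkeeping, the proof is a straightforward elementwise calculation and does not require any assumption beyond the (already-stated) symmetry of $\mtS$ and the resulting orthonormal eigenbasis.
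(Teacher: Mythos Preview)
Your proposal is correct and follows essentially the same approach as the paper: substitute the eigendecomposition of $\mtS$, pull out $\vctx$, compute the $(i,j)$ entry of the remaining matrix $\sum_{k}\diag(\vch^{(k)})\mtV\diag(\vclambda^{k})$ as $v_{ij}\sum_{k}h_{ik}\lambda_{j}^{k}$, and then recognize this as $[\mtV]_{ij}[\mtH\mtLambda^{\Tr}]_{ij}$. The paper additionally names the sum $\sum_{k}h_{ik}\lambda_{j}^{k}=\fnth_{i}(\lambda_{j})$ via the per-node frequency response, but this is purely notational and your entrywise identification is equivalent.
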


It is immediately observed that NVGFs are capable of generating new frequency content, even though they are linear.

\begin{corollary} \label{cor:NVGFnewFreq}
If the matrix $\mtV^{\Tr} \big( \mtV \circ (\mtH \mtLambda^{\Tr}) \big)$ is not diagonal, then the output exhibits frequency content that is not present in the input.
\end{corollary}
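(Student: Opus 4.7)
The plan is to exhibit, from non-diagonality of $\mtM := \mtV^{\Tr}\bigl(\mtV \circ (\mtH\mtLambda^{\Tr})\bigr)$, a specific input graph signal whose frequency response contains a component absent from the input's frequency response. This establishes the corollary by example, which is the natural reading of ``exhibits frequency content that is not present in the input.''

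First, I would restate the claim in frequency-domain terms using Proposition~\ref{prop:GFToutputNVGF}: since $\vcty = \mtM \vctx$, the output signal's $i^{\text{th}}$ frequency component $\scty_{i}$ is created from scratch whenever $\scty_{i} \neq 0$ but $\sctx_{i} = 0$. So the goal is to produce some $\vctx$ for which such an index $i$ exists.

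Next, I would use the non-diagonality hypothesis: it guarantees the existence of indices $i \neq j$ with $[\mtM]_{ij} \neq 0$. The witness input is then the graph signal $\vcx^{\circ} = \vcv_{j}$, the $j^{\text{th}}$ eigenvector of $\mtS$. By orthonormality of $\mtV$, its GFT satisfies $\vctx^{\circ} = \mtV^{\Tr}\vcv_{j} = \vce_{j}$, the $j^{\text{th}}$ standard basis vector. Then
\begin{equation*}
    \vcty^{\circ} = \mtM \vctx^{\circ} = \mtM \vce_{j},
\end{equation*}
which is precisely the $j^{\text{th}}$ column of $\mtM$. In particular, $[\vcty^{\circ}]_{i} = [\mtM]_{ij} \neq 0$, whereas $[\vctx^{\circ}]_{i} = 0$ since $i \neq j$. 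Therefore the frequency $\lambda_{i}$ is present in the output $\vcy^{\circ}$ but absent from the input $\vcx^{\circ}$, which is the desired conclusion.

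There is no real obstacle here — the argument is a direct consequence of Proposition~\ref{prop:GFToutputNVGF} combined with the fact that the GFT $\mtV^{\Tr}$ is a bijection, so any target $\vctx$ in the frequency domain is realizable as some graph signal $\vcx = \mtV\vctx$. The only subtlety worth noting is the contrast with the LSIGF case of \eqref{eq:convolutionTheorem}, where $\mtM$ is always diagonal and hence no such frequency creation is possible; this should be mentioned to emphasize the distinctive role of NVGFs as linear frequency-creating operators.
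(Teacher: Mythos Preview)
Your proof is correct and follows essentially the same approach as the paper: both use Proposition~\ref{prop:GFToutputNVGF} to write $\vcty = \mtM\vctx$ and then argue that a nonzero off-diagonal entry $[\mtM]_{ij}$ forces frequency creation. The only cosmetic difference is that you explicitly construct the witness input $\vcx^{\circ}=\vcv_{j}$ inside the proof, whereas the paper states the dependence argument abstractly and presents the single-eigenvector example separately right after the corollary; your version is arguably cleaner for that reason.
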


\ifundefined{arXiv}
\else
As an example, consider the case where the input is given by a single frequency signal, i.e., $\vcx = \vcv_{t}$ for some $t\in\{1,\ldots,N\}$ so that $\vctx = \vce_{t}$ where $[\vce_{t}]_{i} = 1$ if $i=t$ and $[\vce_{t}]_{i} = 0$ otherwise. This is a signal that has a single frequency component. Yet, the $i^{\text{th}}$ entry of the frequency response of the NVGF output is $\scty_{i} = \sum_{j=1}^{N} \fnth_{j}(\lambda_{t}) v_{ji}v_{jt}$, where $v_{ij} = [\mtV]_{ij}$ and $\fnth_{j}$ is the frequency response obtained from using the $K+1$ coefficients at node $\lmv_{j}$. Unless $\fnth_{j}(\lambda_{t})$ is a constant for all $j$, i.e., $\fnth_{j}(\lambda_{t}) = \scth_{t}$, the $i^{\text{th}}$ entry of the GFT of the output is nonzero even when the $i^{\text{th}}$ entry of the GFT of the input is zero. This also shows that by carefully designing the filter taps for each node, the frequencies that are allowed to be created can be chosen, analyzed, and understood.
\fi

To finalize the frequency analysis of NVGFs, we establish their Lipschitz continuity to changes in the underlying graph support, as decribed by the matrix $\mtS$. In what follows, we denote the spectral norm of a matrix $\mtA$ by $\|\mtA\|_2$.

\begin{theorem}[Lipschitz continuity of the NVGF with respect to $\mtS$] \label{thm:stability}
Let $\stG$ and $\sthG$ be two graphs with $N$ nodes, described by GMDs $\mtS \in \fdR^{N \times N}$ and $\mthS \in \fdR^{N \times N}$, respectively. Let $\mtH \in \fdR^{N \times(K+1)}$ be the coefficients of any NVGF. Given a constant $\sceps > 0$, if $\|\mthS - \mtS\|_{2} \leq \sceps$, it holds that
\begin{equation} \label{eq:stability}
    \Big\| \big( \mtfnH^{\text{nv}}(\mthS) - \mtfnH^{\text{nv}}(\mtS) \big) \vcx \Big\|_{2} \leq \sceps C \sqrt{N} ( 1 + 8N) \| \vcx \|_{2} + \bigOh(\sceps^{2})
\end{equation}
where $\mtfnH^{\text{nv}}(\mtS)$ and $\mtfnH^{\text{nv}}(\mthS)$ are the NVGF on $\mtS$ and $\mthS$, respectively, and where $C$ is the Lipschitz constant of the frequency responses at each node, i.e., $|\fnth_{t}(\lambda_{j})-\fnth_{t}(\lambda_{i})| \leq C |\lambda_{j} - \lambda_{i}|$ for all $i,j,t \in \{1,\ldots,N\}$.
\end{theorem}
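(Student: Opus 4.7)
The plan is to write $\mthS = \mtS + \mtE$ with $\|\mtE\|_{2} \leq \sceps$ and linearize in $\mtE$. Using the standard telescoping identity
\[
\mthS^{k} - \mtS^{k} \;=\; \sum_{j=0}^{k-1} \mtS^{j}\, \mtE\, \mtS^{k-1-j} \;+\; \bigOh(\sceps^{2}),
\]
the NVGF difference $\mtfnH^{\text{nv}}(\mthS) - \mtfnH^{\text{nv}}(\mtS) = \sum_{k=0}^{K} \diag(\vch^{(k)})(\mthS^{k} - \mtS^{k})$ becomes a linear function of $\mtE$, the quadratic-and-higher remainder being absorbed into the stated $\bigOh(\sceps^{2})$.

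Next I would rotate the perturbation into the eigenbasis of $\mtS$. Writing $\mtS = \mtV \diag(\vclambda) \mtV^{\Tr}$ and setting $\mtF = \mtV^{\Tr} \mtE \mtV$, one has $\|\mtF\|_{2} = \|\mtE\|_{2} \leq \sceps$ and hence $\|\mtF\|_{F} \leq \sqrt{N}\sceps$. Under this rotation, $\mtS^{j} \mtE \mtS^{k-1-j}$ becomes $\mtV\big(\diag(\vclambda^{j})\,\mtF\,\diag(\vclambda^{k-1-j})\big)\mtV^{\Tr}$, whose $(i,l)$ entry carries the factor $\lambda_{i}^{j}\,[\mtF]_{il}\,\lambda_{l}^{k-1-j}$. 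The crucial observation is a polynomial identity, valid because the eigenvalues are distinct by assumption: for $\lambda_{i} \neq \lambda_{l}$,
\[
B_{il}^{(t)} \;\coloneqq\; \sum_{k=0}^{K} h_{tk} \sum_{j=0}^{k-1} \lambda_{i}^{j}\,\lambda_{l}^{k-1-j} \;=\; \frac{\fnth_{t}(\lambda_{i}) - \fnth_{t}(\lambda_{l})}{\lambda_{i} - \lambda_{l}},
\]
with $B_{ii}^{(t)} = \fnth_{t}'(\lambda_{i})$ on the diagonal as a limiting case. The Lipschitz hypothesis on each $\fnth_{t}$ immediately forces $|B_{il}^{(t)}| \leq C$ for every triple $(i,l,t)$.

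Substituting back and applying to $\vcx$ collapses the sums over $k$ and $j$ and produces, to first order in $\sceps$,
\[
[(\mtfnH^{\text{nv}}(\mthS) - \mtfnH^{\text{nv}}(\mtS))\vcx]_{t} \;=\; \sum_{l=1}^{N} [\vctx]_{l} \sum_{i=1}^{N} v_{ti}\, B_{il}^{(t)}\, [\mtF]_{il} \;+\; \bigOh(\sceps^{2}),
\]
where $\vctx = \mtV^{\Tr}\vcx$. From here two successive Cauchy-Schwarz steps close the argument: the inner one uses the row orthonormality of $\mtV$ (so $\sum_{i} v_{ti}^{2} = 1$) together with the Frobenius bound $\|\mtF\|_{F} \leq \sqrt{N}\sceps$, and the outer one in $l$ pairs $|[\vctx]_{l}|$ against $\|[\mtF]_{:,l}\|_{2}$; taking Euclidean norms across the $N$ indices $t$ contributes a final $\sqrt{N}$. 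Tracking the constants while separately bounding the diagonal contributions $i = l$ (which account for the summand $\sqrt{N}$) and the off-diagonal contributions $i \neq l$ (which account for $8N^{3/2}$) recovers the announced bound $\sceps C \sqrt{N}(1 + 8N)\|\vcx\|_{2} + \bigOh(\sceps^{2})$.

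The main obstacle will be bookkeeping the exact numerical constant $8$ in $(1+8N)$. The divided-difference identity uniformly controls every $B_{il}^{(t)}$ by $C$, but the cleanest path to the stated coefficient requires splitting the inner sum into its diagonal and off-diagonal parts and applying Cauchy-Schwarz to each piece separately, so that the Frobenius bound $\|\mtF\|_{F} \leq \sqrt{N}\|\mtF\|_{2}$ is only used where strictly necessary and the loose constants from each step rebundle to precisely $(1+8N)$ rather than some other polynomial factor.
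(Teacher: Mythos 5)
Your proposal is correct in substance, but it takes a genuinely different route from the paper at the decisive step. Both arguments share the same skeleton: the first-order expansion $\mthS^{k}-\mtS^{k}=\sum_{r=0}^{k-1}\mtS^{r}\mtE\mtS^{k-r-1}+\bigOh(\sceps^{2})$ and the divided-difference identity (your $B_{il}^{(t)}$ is exactly the matrix $\mtG_{i}$ of the paper's auxiliary lemma, with $\fnth_{t}'(\lambda_{i})$ on the diagonal), together with the same mild abuse of interpreting the Lipschitz hypothesis as also bounding $|\fnth_{t}'(\lambda_{i})|\leq C$. Where you diverge is in how the perturbation is handled: the paper decomposes $\mtE\vcv_{i}=m_{i}\vcv_{i}+\mtE_{U}\vcv_{i}$ using the eigendecomposition of $\mtE$ and an external result (\cite[Lemma 1]{Gama2020-Stability}) guaranteeing $\|\mtE_{U}\|\leq 8\sceps$, then bounds the aligned term by $\sceps C\sqrt{N}\|\vcx\|$ and the misaligned term by $8\sceps CN\sqrt{N}\|\vcx\|$, which is precisely where the factor $(1+8N)$ comes from. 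You instead keep the whole first-order term, rotate it into the eigenbasis via $\mtF=\mtV^{\Tr}\mtE\mtV$, bound every $|B_{il}^{(t)}|$ by $C$, and close with Cauchy--Schwarz in $i$ and $l$ plus $\|\mtF\|_{F}\leq\sqrt{N}\sceps$. Carried out straightforwardly, this gives $|d_{t}|\leq C\|\vcx\|_{2}\|\mtF\|_{F}\leq C\sqrt{N}\sceps\|\vcx\|_{2}$ for each $t$ and hence $\sceps CN\|\vcx\|_{2}+\bigOh(\sceps^{2})$ overall, which is \emph{tighter} than the stated bound (since $N\leq\sqrt{N}(1+8N)$) and therefore proves the theorem, while avoiding the external eigenvector-perturbation lemma entirely. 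The one thing to fix is your closing paragraph: the constant $8$ is an artifact of the paper's reliance on that external lemma, not something your Cauchy--Schwarz bookkeeping will (or should) reproduce, and the proposed diagonal/off-diagonal splitting engineered to land on exactly $(1+8N)$ is unnecessary. Simply state your cleaner bound and observe that it implies \eqref{eq:stability}.
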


Theorem~\ref{thm:stability} establishes the Lipschitz continuity of the NVGF filter with respect to the support matrix $\mtS$ (Lipschitz continuity with respect to the input $\vcx$ is immediately given for bounded filter taps) as long as the graphs $\mtS$ and $\mthS$ are similar, i.e., $\sceps \ll 1$. The bound is proportional to this difference, $\sceps$, and to the shape of the frequency responses at each node through the Lipschitz constant $C$. It also depends on the number of nodes $N$, but it is fixed for given graphs with the same number of nodes. In short, Theorem~\ref{thm:stability} gives mild guarantees on the expected performance of the NVGF across a wide range of graphs $\mthS$ that are close to the graph $\mtS$. 


\section{Approximating Activation Functions} \label{sec:activation}



One of the main roles of activation functions in neural networks is to create low-frequency content that can be processed in a stable manner \cite{Mallat2012-Scattering}. However, the way the nonlinearities create this frequency content is unknown and cannot be shaped. One alternative for tailoring the frequency creation process to the specific problem under study is to learn the NVGF filter taps (Sec.~\ref{sec:archit}). However, doing so, implies that the number of learnable parameters depend on $N$ which may lead to overfitting. In what follows we propose one method of designing, instead of learning, the NVGFs.

\textbf{Problem statement.} Assume that each data point $\vcx$ is a random graph signal with mean $\xp[\vcx] = \vcmu_{x}$, correlation matrix $\mtR_{x} = \xp[\vcx \vcx^{\Tr}]$, and covariance matrix $\mtC_{x} = \xp[(\vcx-\vcmu_{x})(\vcx-\vcmu_{x})^{\Tr}]$. The objective is to estimate a pointwise nonlinear function $\fnrho: \fdR \to \fdR$ such that $[\fnrho(\vcx)]_{i} = \fnrho([\vcx]_{i})$, using a NVGF-based estimator as $\vchy = \mtfnH^{\text{nv}}(\mtS) \vcx + \vcc$ for $\mtfnH^{\text{nv}}(\mtS)$ as in \eqref{eq:NVGF} and $\vcc \in \fdR^{N}$. Given the random variable $\vcy = \fnrho(\vcx)$, the aim is to find the filter taps $\mtH \in \fdR^{N \times (K+1)}$ that minimize the mean squared error (MSE)
\begin{equation} \label{eq:Hopt}
    \mtH^{\opt} = \argmin_{\mtH \in \fdR^{N \times (K+1)}} \xp \big[ \| \vchy - \vcy \|_{2}^{2} \big].
\end{equation}
Our particular focus is set on obtaining unbiased estimators.

\begin{lemma}[Unbiased estimator] \label{l:unbiased}
Let $\vcmu_{\rho} = \xp[\fnrho(\vcx)]$. The NVGF-based estimator is unbiased if and only if $\vcc = \vcmu_{\rho} - \mtfnH^{\text{nv}}(\mtS) \vcmu_{x}$.
\end{lemma}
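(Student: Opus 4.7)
The plan is to unpack the definition of an unbiased estimator and apply linearity of expectation. By definition, $\hat{\vcy}$ is an unbiased estimator of $\vcy = \fnrho(\vcx)$ if and only if $\xp[\hat{\vcy}] = \xp[\vcy] = \vcmu_{\rho}$.

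First I would compute $\xp[\hat{\vcy}]$ directly. Since $\hat{\vcy} = \mtfnH^{\text{nv}}(\mtS) \vcx + \vcc$ is an affine function of $\vcx$ (the matrix $\mtfnH^{\text{nv}}(\mtS)$ is a deterministic function of the fixed graph $\mtS$ and the fixed filter taps $\mtH$, and $\vcc$ is deterministic), linearity of expectation yields
\begin{equation*}
    \xp[\hat{\vcy}] = \mtfnH^{\text{nv}}(\mtS) \xp[\vcx] + \vcc = \mtfnH^{\text{nv}}(\mtS) \vcmu_{x} + \vcc.
\end{equation*}

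Next I would equate this expression with $\vcmu_{\rho}$ to obtain the unbiasedness condition. Solving the resulting linear equation for $\vcc$ gives $\vcc = \vcmu_{\rho} - \mtfnH^{\text{nv}}(\mtS) \vcmu_{x}$, which establishes both directions of the equivalence simultaneously: the ``if'' direction by substitution back into $\xp[\hat{\vcy}]$, and the ``only if'' direction because $\vcc$ is uniquely determined by the constraint. There is no genuine obstacle here; the result is essentially a one-line consequence of linearity of expectation, and the lemma's role is simply to fix the bias-cancelling offset $\vcc$ so that subsequent MSE minimization in \eqref{eq:Hopt} reduces to a problem only in $\mtH$.
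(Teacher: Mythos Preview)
Your proposal is correct and follows essentially the same approach as the paper: apply linearity of expectation to $\vchy = \mtfnH^{\text{nv}}(\mtS)\vcx + \vcc$, set the result equal to $\vcmu_{\rho}$, and solve for $\vcc$. The paper's proof is virtually line-for-line identical, so there is nothing to add.
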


From Lemma~\ref{l:unbiased}, the unbiased estimator is now
\begin{equation}
    \vchy = \mtfnH^{\text{nv}}(\mtS) \big( \vcx - \vcmu_{x} \big) + \vcmu_{\rho}.
\end{equation}
Therefore, the objective becomes finding the filter tap matrix $\mtH \in \fdR^{N \times (K+1)}$ for some fixed value of $K$ that satisfies
\begin{equation} \label{eq:approx}
    \mtH^{\opt} = \argmin_{\mtH \in \fdR^{N \times(K+1)}} \xp \Big[ \big\| \mtfnH^{\text{nv}}(\mtS) (\vcx-\vcmu_{x}) - \big(\fnrho(\vcx) - \vcmu_{\rho}\big) \big\|_{2}^{2} \Big].
\end{equation}
Note that, due to the orthogonal nature of the GFT, minimizing \eqref{eq:approx} is equivalent to minimizing the difference of the corresponding frequency responses.

The optimal filter taps for each node, i.e., the rows $\vch_{i}^{\opt} \in \fdR^{K+1}$ of $\mtH^{\opt}$ that solve \eqref{eq:approx}, can be obtained by solving a linear system of equations as follows.

\begin{proposition}[Optimal NVGF] \label{prop:NVGFoptimal}
Let $\vcu_{i}$ denote the $i^{\text{th}}$ row of $\mtV$,  $\mtR_{i} = \mtLambda^{\Tr} \diag(\vcu_{i}) \vcV^{\Tr} \mtC_{x} \vcV \diag(\vcu_{i}) \mtLambda$ be the covariance matrix of the frequency response at node $\lmv_{i}$ for the input $\vcx$, and $\vcp_{i} = \mtLambda^{\Tr} \diag(\vcu_{i}) \vcV^{\Tr} \xp[(\fnrho(x_{i}) - \mu_{\rho i})(\vcx-\vcmu_{x})]$ denote the correlation between the filtered signal and the target nonlinearity. Then, a set of filter taps $\{\vch_{1}^{\opt},\ldots,\vch_{N}^{\opt}\}$ is optimal for \eqref{eq:approx} if and only if they solve the system of linear equations
\begin{equation} \label{eq:NVGFoptimal}
    \mtR_{i} \vch_{i}^{\opt} = \vcp_{i}, \quad i\in\{1,\ldots,N\}.
\end{equation}
\end{proposition}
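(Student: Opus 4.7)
The plan is to exploit the fact that, when written row-by-row, the objective in \eqref{eq:approx} decouples across nodes, since the $i$-th row $\vch_i$ of $\mtH$ controls only the $i$-th entry of $\mtfnH^{\text{nv}}(\mtS)(\vcx-\vcmu_x)$. The remaining work is then a standard quadratic least-squares argument applied separately at each node, whose normal equations I will identify with \eqref{eq:NVGFoptimal}.

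First, set $\vcz = \vcx - \vcmu_x$ and $\vcw = \fnrho(\vcx) - \vcmu_\rho$, so $\xp[\vcz\vcz^\Tr] = \mtC_x$. Using $\mtS^k = \mtV \diag(\vclambda^k) \mtV^\Tr$ and letting $\vcu_i$ be the $i$-th row of $\mtV$, the $i$-th entry of the filter output can be written as
\begin{equation*}
\vce_i^\Tr \mtfnH^{\text{nv}}(\mtS)\vcz
= \sum_{k=0}^K h_{ik}\, \vcu_i\, \diag(\vclambda^k)\, \mtV^\Tr \vcz
= \vch_i^\Tr \underbrace{\mtLambda^\Tr \diag(\vcu_i) \mtV^\Tr \vcz}_{=:\vcq_i},
\end{equation*}
which uses the identities $\sum_k h_{ik}\diag(\vclambda^k) = \diag(\mtLambda\vch_i)$ and $\vcu_i\diag(\vca) = \vca^\Tr\diag(\vcu_i)$. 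Consequently,
\begin{equation*}
\xp\Big[\big\|\mtfnH^{\text{nv}}(\mtS)\vcz - \vcw\big\|_2^2\Big]
= \sum_{i=1}^N \xp\big[(\vch_i^\Tr \vcq_i - w_i)^2\big],
\end{equation*}
so the minimization in \eqref{eq:approx} splits into $N$ independent problems, one per row of $\mtH$.

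Next, I expand each term and take gradients. For each node $i$,
\begin{equation*}
\xp\big[(\vch_i^\Tr \vcq_i - w_i)^2\big]
= \vch_i^\Tr \xp[\vcq_i\vcq_i^\Tr]\vch_i - 2\vch_i^\Tr \xp[\vcq_i w_i] + \xp[w_i^2].
\end{equation*}
Substituting the definition of $\vcq_i$ gives $\xp[\vcq_i\vcq_i^\Tr] = \mtLambda^\Tr\diag(\vcu_i)\mtV^\Tr \mtC_x \mtV\diag(\vcu_i)\mtLambda = \mtR_i$, and $\xp[\vcq_i w_i] = \mtLambda^\Tr\diag(\vcu_i)\mtV^\Tr \xp[(\fnrho(x_i)-\mu_{\rho i})\vcz] = \vcp_i$. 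Since $\mtR_i$ is positive semidefinite, the objective is a convex quadratic in $\vch_i$, so $\vch_i^\opt$ is a minimizer if and only if the first-order condition holds, namely $\mtR_i \vch_i^\opt = \vcp_i$. Collecting these conditions over $i=1,\ldots,N$ yields \eqref{eq:NVGFoptimal}.

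The main obstacle is purely bookkeeping: correctly rewriting $\vce_i^\Tr \mtfnH^{\text{nv}}(\mtS)\vcz$ as $\vch_i^\Tr \vcq_i$ so the decoupling across nodes becomes transparent; once that identity is in place, the rest is the standard normal-equation argument. One subtlety worth noting is that, because the objective is convex but $\mtR_i$ may be only PSD (not strictly PD), the minimizer need not be unique; nevertheless the set of minimizers is exactly the solution set of \eqref{eq:NVGFoptimal}, which is why the ``if and only if'' phrasing is appropriate.
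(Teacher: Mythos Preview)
Your proof is correct and follows essentially the same approach as the paper's: both rewrite the $i$-th output entry as a linear form $\vch_i^\Tr \vcq_i$ (the paper calls your $\vcq_i$ by $\vcfna_i(\vcx)$), decouple the objective across nodes, and read off the normal equations from the resulting convex quadratic. Your closing remark about $\mtR_i$ being only PSD and the minimizer possibly being non-unique is a nice addition that the paper does not make explicit.
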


From Proposition~\ref{prop:NVGFoptimal}, it is immediate that only knowledge of the first and second moments of $\vcx$, and of the correlation between the input $\vcx$ and the output $\vcy = \fnrho(\vcx)$, is required to solve for the optimal NVGF. These moments can be estimated from training data. Also observed is that the optimal filter taps for each node can be computed separately at each node.

\ifundefined{arXiv}
\else
We now consider the specific case of random graph signals with zero-mean, independent, identically distributed (i.i.d.) entries, to illustrate the form of the optimal NVGF.

\begin{corollary}[Zero-mean, i.i.d., ReLU nonlinearity] \label{cor:ReLU}
Assume that the elements of $\vcx$ are i.i.d., and that $\vcmu_{x} = \vcZeros$ and $\mtR_{x} = \mtC_{x} = \sigma_{x}^{2} \mtI$ for some $\sigma_{x}^{2} > 0$. Consider an unbiased NVGF-based estimator of the form
\begin{equation} \label{eq:zero-meanEstimator}
    \schy_{i} = (\scxi^{2}/\sigma_{x}^{2}) x_{i} + \mu_{\rho}
\end{equation}
with $\scxi^{2} = \xp[\rho(x)x]$ and $\mu_{\rho} = \xp[\rho(x)]$, where $x$ is distributed as the elements of $\vcx$. Then, the estimator \eqref{eq:zero-meanEstimator} is optimal for \eqref{eq:approx}. If, additionally, $\rho(\cdot)=\ReLU(\cdot)$ and the distribution of $x$ is symmetric around $0$, then $\mu_{\rho}=0$ and $\scxi^{2} = \sigma_{x}^{2}/2$, so that \eqref{eq:zero-meanEstimator} reduces to $\schy_{i} = x_{i}/2$.
\end{corollary}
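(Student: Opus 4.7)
The plan is to apply Proposition~\ref{prop:NVGFoptimal} and compute the normal-equation ingredients $\mtR_{i}$ and $\vcp_{i}$ explicitly under the i.i.d.\ hypothesis. First I would use $\mtC_{x} = \sigma_{x}^{2}\mtI$ together with orthogonality of $\mtV$ to collapse $\mtV^{\Tr}\mtC_{x}\mtV = \sigma_{x}^{2}\mtI$, which reduces the node-$i$ covariance to $\mtR_{i} = \sigma_{x}^{2}\, \mtLambda^{\Tr}\diag(\vcu_{i} \circ \vcu_{i})\, \mtLambda$. For the cross-moment $\vcp_{i}$, independence of the entries of $\vcx$ together with $\xp[x_{j}] = 0$ gives $\xp[(\rho(x_{i}) - \mu_{\rho})\, x_{j}] = 0$ for $j \neq i$ and $\scxi^{2}$ for $j = i$; hence $\xp[(\rho(x_{i}) - \mu_{\rho})(\vcx - \vcmu_{x})] = \scxi^{2}\vce_{i}$, and using $\mtV^{\Tr}\vce_{i} = \vcu_{i}$ yields $\vcp_{i} = \scxi^{2}\, \mtLambda^{\Tr}(\vcu_{i} \circ \vcu_{i})$.

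Next I would exhibit the candidate solution $\vch_{i}^{\opt} = (\scxi^{2}/\sigma_{x}^{2})\, \vce_{1} \in \fdR^{K+1}$, identical for every node $\lmv_{i}$. Because the first column of the Vandermonde matrix is $\mtLambda \vce_{1} = \vcOnes$, a direct substitution gives $\mtR_{i} \vch_{i}^{\opt} = \scxi^{2}\, \mtLambda^{\Tr}\diag(\vcu_{i} \circ \vcu_{i})\vcOnes = \scxi^{2}\, \mtLambda^{\Tr}(\vcu_{i} \circ \vcu_{i}) = \vcp_{i}$, so the candidate solves \eqref{eq:NVGFoptimal} at every node. This choice corresponds to $\vch^{(0)} = (\scxi^{2}/\sigma_{x}^{2})\vcOnes$ and $\vch^{(k)} = \vcZeros$ for $k \geq 1$, so $\mtfnH^{\text{nv}}(\mtS) = (\scxi^{2}/\sigma_{x}^{2})\mtI$; combining with Lemma~\ref{l:unbiased} and $\vcmu_{x} = \vcZeros$ recovers exactly the estimator in \eqref{eq:zero-meanEstimator}.

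Finally, for the ReLU specialization I would invoke symmetry of the law of $x$ about the origin: since $\ReLU(x)\, x = x^{2}\indFn\{x > 0\}$, symmetry of the density gives $\scxi^{2} = \xp[x^{2}\indFn\{x > 0\}] = \tfrac{1}{2}\xp[x^{2}] = \sigma_{x}^{2}/2$, so $\scxi^{2}/\sigma_{x}^{2} = 1/2$ and the estimator reduces to $\schy_{i} = x_{i}/2$ once the bias term $\mu_{\rho}$ is absorbed as stated. The main obstacle I anticipate is not the algebra but rather the potential rank-deficiency of $\mtR_{i}$, which may occur if some entry $[\vcu_{i}]_{k}$ vanishes or if $K + 1 > N$, so the normal equations need not determine $\vch_{i}^{\opt}$ uniquely. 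However, Proposition~\ref{prop:NVGFoptimal} labels every solution of \eqref{eq:NVGFoptimal} as optimal, and all solutions produce the same MSE, so exhibiting the single solution above is enough to conclude the optimality of the estimator~\eqref{eq:zero-meanEstimator}.
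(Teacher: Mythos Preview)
Your argument is correct and follows the same route as the paper's own (commented-out) draft proof: specialize $\mtR_{i}$ and $\vcp_{i}$ from Proposition~\ref{prop:NVGFoptimal} under the i.i.d.\ hypothesis, solve the normal equations, and finish with the symmetry computation for $\scxi^{2}$; your version is in fact tidier, since you exhibit the explicit solution $\vch_{i}^{\opt}=(\scxi^{2}/\sigma_{x}^{2})\vce_{1}$ and verify it via $\mtLambda\vce_{1}=\vcOnes$, whereas the paper's draft tries to manipulate the normal equations directly. One caveat worth flagging: the statement's assertion that $\mu_{\rho}=0$ in the ReLU case is not actually true for a nondegenerate symmetric law (one has $\mu_{\rho}=\xp[\max(x,0)]=\tfrac{1}{2}\xp[|x|]>0$), and neither your argument nor the paper's draft establishes it---your phrase ``absorbed as stated'' sidesteps rather than proves this clause, which is the right instinct since the clause itself is erroneous.
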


Corollary~\ref{cor:ReLU} shows that if the elements of the graph signal are zero-mean and i.i.d., then the NVGF boils down to a LSI graph filter. Hence, this optimal unbiased NVGF is not capable of generating frequencies. This is sensible since the elements of the graph signal $\vcx$ bare no relationship to the underlying graph support, and thus the NVGF does not leverage that support to create the appropriate frequencies. Furthermore, if the distribution of each element of $\vcx$ is symmetric around zero, then an optimal approximation of the $\ReLU$ nonlinearity amounts to a LSI graph filter that outputs half of the input, as one would expect.

\begin{corollary}[Stationary graph processes] \label{cor:stationary}
Let $\vcx$ be a stationary graph process \cite{Perraudin2017-StationaryGSP, Marques2017-StationaryGSP}, so that $\vcmu_{x} = \vcZeros$ and $\mtR_{x} = \mtC_{x} = \mtV \diag(\vcq) \mtV^{\Tr}$ with power spectral density $\vcq \in \fdR^{N}$. Then, NVGF filter taps $\vch_i^{\opt}$ at node $\lmv_{i}$ solving the linear equations $\diag(\vcu_{i} \circ \vcq) \mtLambda \vch_{i}^{\opt} = \mtV^{\Tr} \xp [\rho(x_{i}) \vcx]$ are optimal.
\end{corollary}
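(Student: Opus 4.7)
The plan is to specialize the optimality system \eqref{eq:NVGFoptimal} in Proposition~\ref{prop:NVGFoptimal} to the stationary case and then factor out a common left multiplier to obtain the simpler linear system stated in the corollary.

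First, I would substitute the stationarity assumptions $\vcmu_{x} = \vcZeros$ and $\mtC_{x} = \mtV \diag(\vcq) \mtV^{\Tr}$ into the definitions of $\mtR_{i}$ and $\vcp_{i}$. Using orthonormality of $\mtV$ (so that $\mtV^{\Tr} \mtV = \mtI$), the spectral covariance collapses to $\mtV^{\Tr} \mtC_{x} \mtV = \diag(\vcq)$. Exploiting the fact that products of diagonal matrices commute and multiply entrywise, I obtain
\begin{equation*}
    \mtR_{i} = \mtLambda^{\Tr} \diag(\vcu_{i}) \diag(\vcq) \diag(\vcu_{i}) \mtLambda = \mtLambda^{\Tr} \diag(\vcu_{i}) \diag(\vcu_{i} \circ \vcq) \mtLambda.
\end{equation*}
Similarly, since $\vcmu_{x} = \vcZeros$ implies $\mu_{\rho i} \xp[\vcx - \vcmu_x] = \vcZeros$, the correlation vector simplifies to $\vcp_{i} = \mtLambda^{\Tr} \diag(\vcu_{i}) \mtV^{\Tr} \xp[\rho(x_{i}) \vcx]$.

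Next, I would observe that both $\mtR_{i}$ and $\vcp_{i}$ share the common left factor $\mtLambda^{\Tr} \diag(\vcu_{i})$. Consequently, if $\vch_{i}^{\opt}$ satisfies the reduced system
\begin{equation*}
    \diag(\vcu_{i} \circ \vcq) \mtLambda \vch_{i}^{\opt} = \mtV^{\Tr} \xp[\rho(x_{i}) \vcx],
\end{equation*}
then premultiplying both sides by $\mtLambda^{\Tr} \diag(\vcu_{i})$ reproduces exactly the optimality equation $\mtR_{i} \vch_{i}^{\opt} = \vcp_{i}$ from Proposition~\ref{prop:NVGFoptimal}. Hence any such $\vch_{i}^{\opt}$ is optimal for the MSE objective \eqref{eq:approx}, which concludes the proof.

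The only subtle point is that the reduced system has $N$ equations in $K+1$ unknowns and is thus generally overdetermined. The corollary does not assert that a solution necessarily exists, only that \emph{if} a solution exists then it is optimal; this is a sufficient condition arising from the common factor $\mtLambda^{\Tr} \diag(\vcu_{i})$, and no further argument is needed. I anticipate no real obstacle beyond carefully tracking the commutativity of diagonal matrices and the orthonormality of the eigenbasis.
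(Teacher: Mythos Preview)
Your proposal is correct and is precisely the natural derivation: the paper does not spell out a separate proof of this corollary, treating it as an immediate specialization of Proposition~\ref{prop:NVGFoptimal}, which is exactly what you do. Your handling of the $\mu_{\rho i}$ term (which drops because $\xp[\vcx]=\vcZeros$), the collapse $\mtV^{\Tr}\mtC_{x}\mtV=\diag(\vcq)$, and the factoring of the common left multiplier $\mtLambda^{\Tr}\diag(\vcu_{i})$ are all accurate, and your remark that the reduced system is only a sufficient condition is a fair reading of the corollary's wording.
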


It is observed from Corollary~\ref{cor:stationary} that the effect of the contribution of each frequency component to node $\lmv_{i}$ gets modulated by the power spectral density $\vcq$. Also, the filter in this case is different for each node, and thus optimal unbiased NVGFs for processing stationary graph processes actually create frequencies.
\fi


\section{Graph Neural Network Architectures Using Node-Variant Graph Filters}
\label{sec:archit}



\begin{figure*}
    \centering
    \subfloat[Performance Comparison]{%
        \label{fig:austen:comparison}%
        \includegraphics[width=0.55\columnwidth]{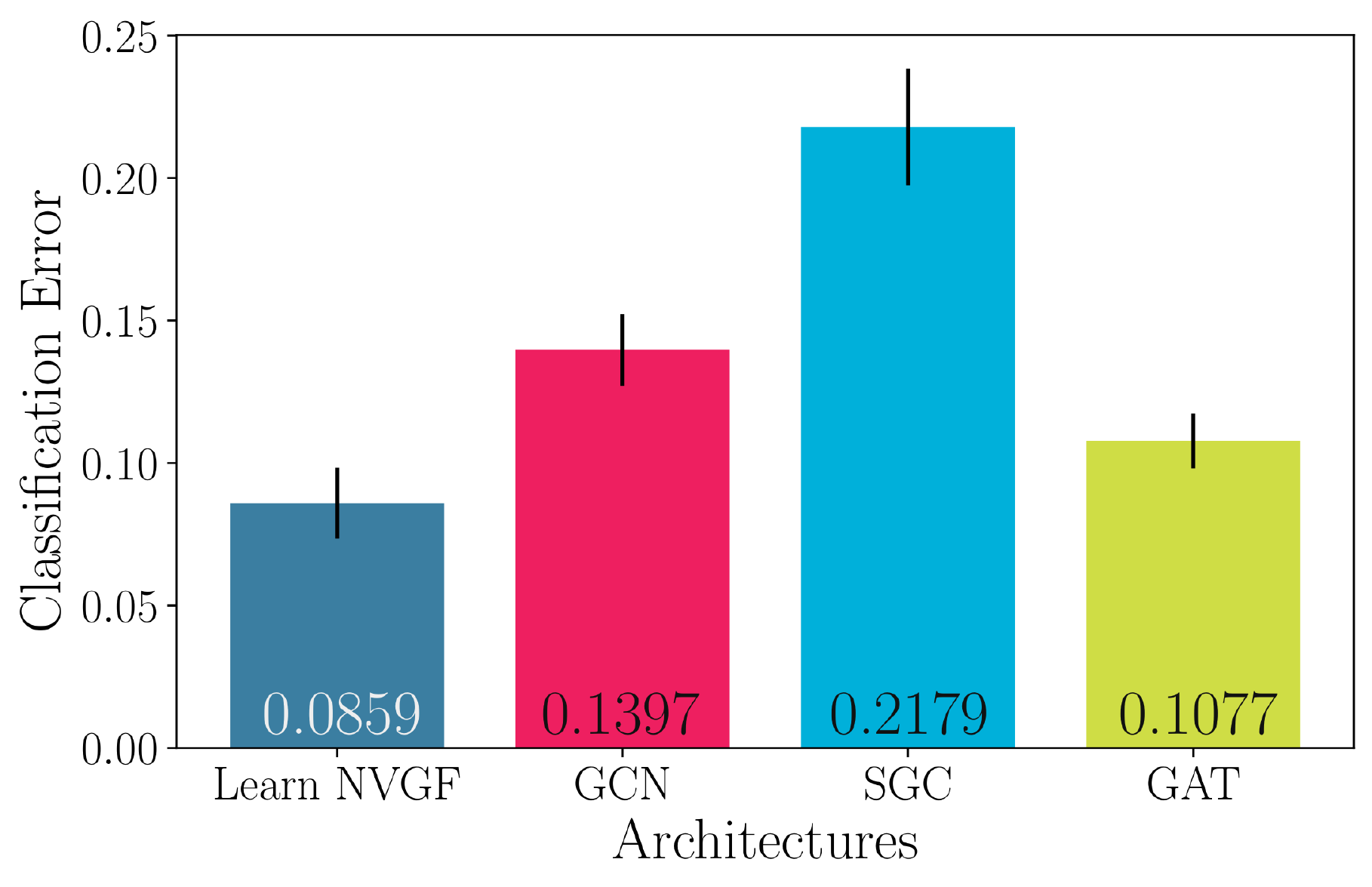}
    }
    \hfill
    \subfloat[Role of frequency creation]{%
        \label{fig:austen:change}
        \includegraphics[width=0.55\columnwidth]{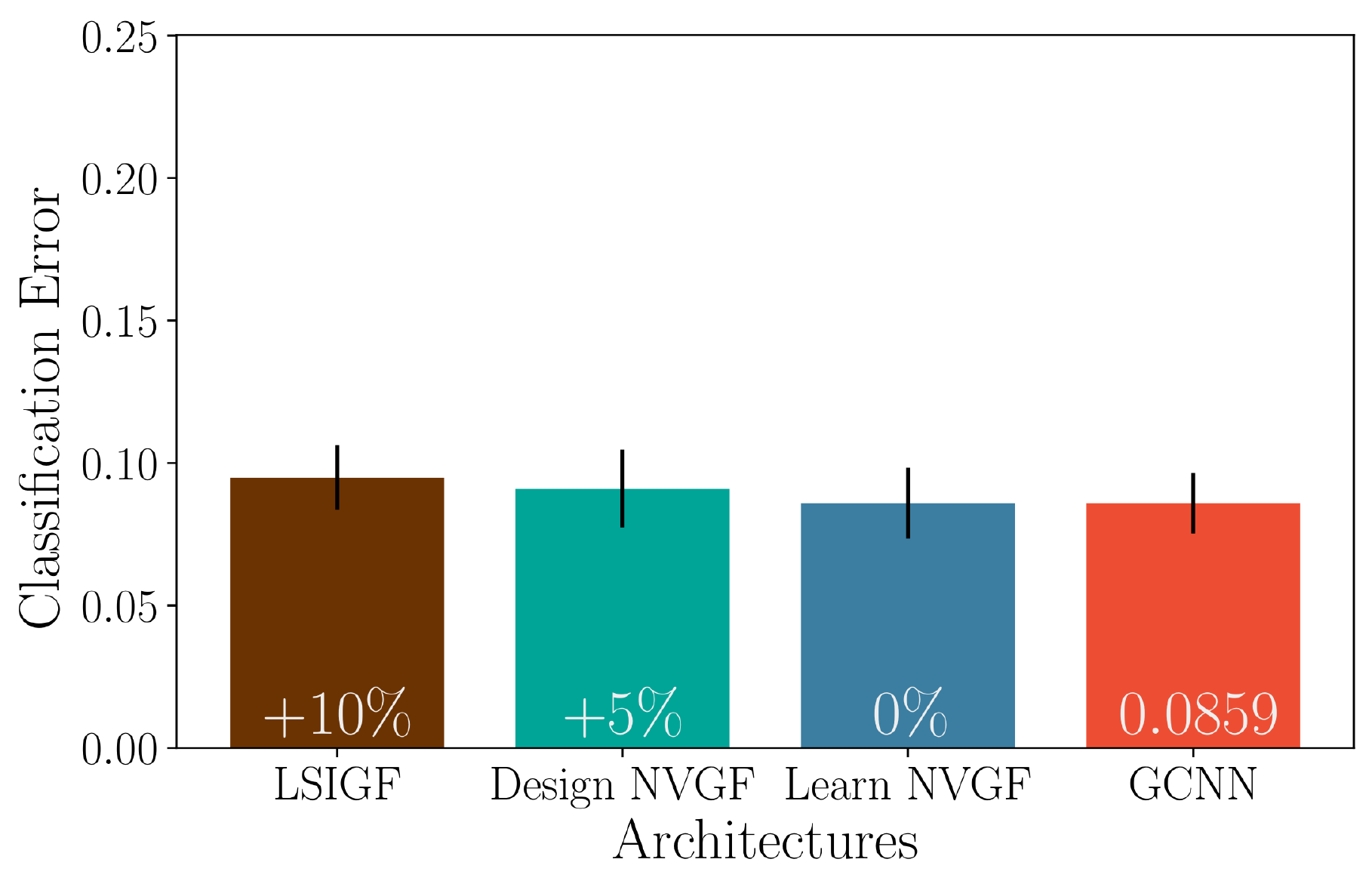}
    }
    \hfill
    \subfloat[Input frequencies]{%
        \label{fig:austen:input}
        \includegraphics[width=0.55\columnwidth]{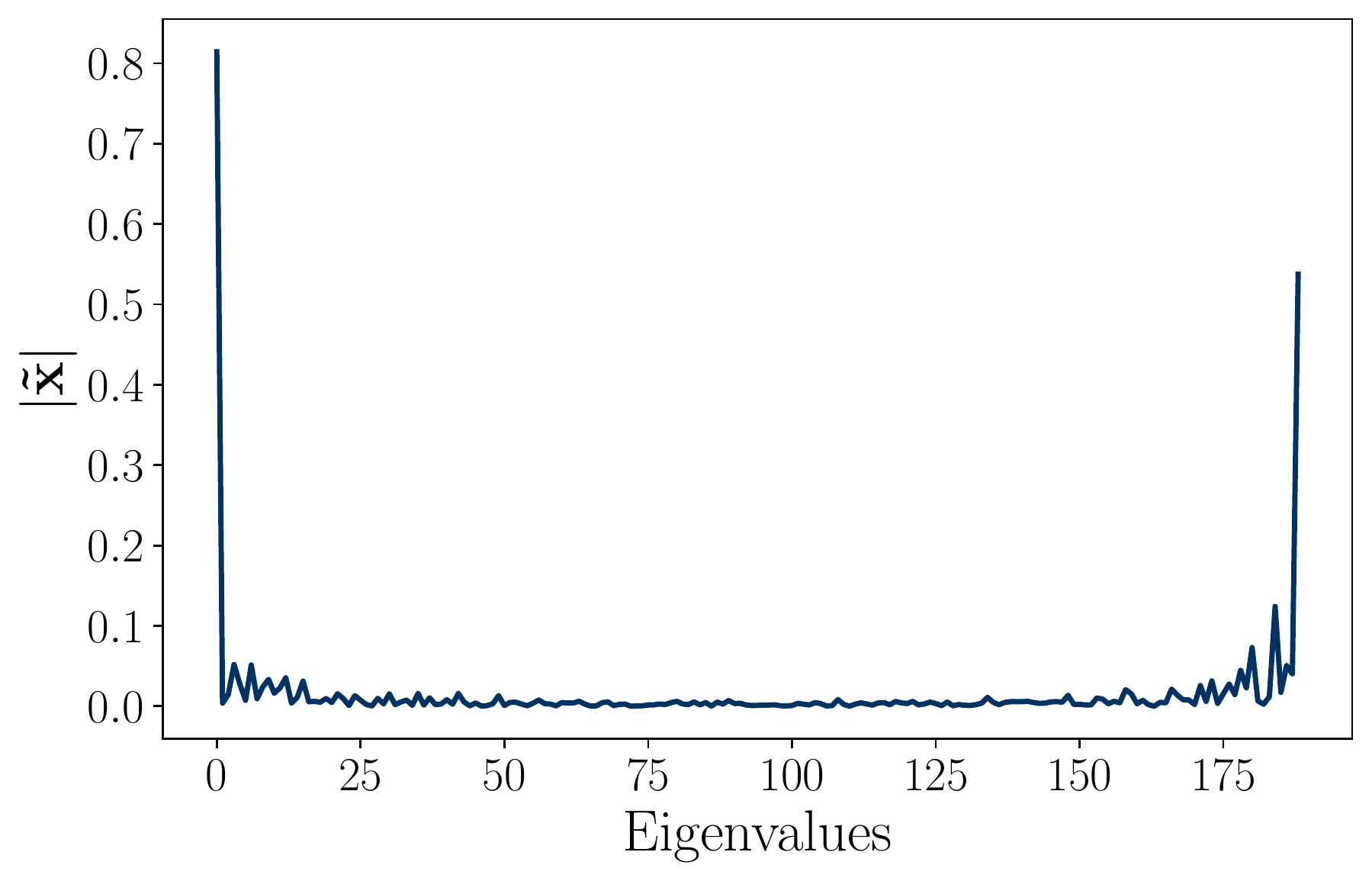}
    }
    \caption{Authorship attribution problem: Jane Austen. (\protect\subref{fig:austen:comparison}) Comparison between the Learn NVGF and three popular architectures (GCN \cite{Kipf2017-GCN}, SGC \cite{Weinberger2019-SGC} and GAT \cite{Velickovic2018-GAT}). (\protect\subref{fig:austen:change}) Change with respect to a GCNN \eqref{eq:GCNN} when considering no activation function (LSIGF), a designed NVGF \eqref{eq:designNVGF} and a learned NVGF \eqref{eq:learnNVGF}. (\protect\subref{fig:austen:input}) A sample of the input frequencies to the architectures, where a large high-frequency component is observed.}
    \label{fig:austen:results}
\end{figure*}

A graph convolutional neural network (GCNN) $\fnPhi$ has a layered architecture \cite{Defferrard2016-ChebNets, Gama2019-Archit}, where each layer applies a LSIGF as in \eqref{eq:LSIfilter}, followed by a pointwise nonlinear activation function $\rho:\fdR \to \fdR$ applied to each node $[\rho(\vcx)]_{i} = \rho([\vcx]_{i})$ and
\begin{equation} \label{eq:GCNN}
    \vcx_{0} = \vcx, \ \vcx_{\ell} = \rho \big( \fnH_{\ell}^{\text{lsi}} (\vcx_{\ell-1} ; \mtS, \vch_{\ell}) \big), \ \vcx_{L} = \fnPhi(\vcx;\mtS,\stH).
\end{equation}
The LSIGF $\fnH_{\ell}^{\text{lsi}}(\cdot;\mtS,\vch_{\ell})$ is characterized by the specific filter taps $\vch_{\ell} \in \fdR^{K_{\ell}+1}$. Note that the output of the GCNN is collected as the output of the last layer $\vcx_{L} = \fnPhi(\vcx;\mtS,\stH)$. This notation emphasizes that the input is $\vcx$, while the matrix $\mtS$ is given by the problem, and the filter taps $\stH = \{\vch_{\ell}\}_{\ell=1}^L$ are learned from data.

Nonlinear activation functions are used in GCNNs to enable them to learn nonlinear relationships between input and output. Additionally, theoretical results have found that they play a key role in creating frequency content that can be better processed by subsequent graph convolutional filters. As previously discussed, NVGFs are also capable of creating new frequency content, albeit in a linear manner. Therefore, by replacing the nonlinear activation functions by NVGFs, it is possible to decouple the contribution made by frequency creation from that made by the architecture's nonlinearity.

The first architecture proposed here is to use a designed NVGF in lieu of the activation function. That is, instead of using the nonlinear activation function $\rho$, an optimal NVGF filter designed as in Proposition~\ref{prop:NVGFoptimal} is used. This requires estimating the first and second moments of the NVGF input data. The architecture, herein termed ``Design NVGF'', is given by
\begin{equation} \label{eq:designNVGF}
    \vcx_{\ell} = \fnH_{\ell}^{\text{nv}} \big( \fnH_{\ell}^{\text{lsi}} (\vcx_{\ell}; \mtS, \vch_{\ell}); \mtS, \mtH_{\ell}^{\opt} \big).
\end{equation}
Note that, in this case, the filter taps $\mtH_{\ell}^{\opt} \in \fdR^{N \times (K_{\ell}+1)}$ of the NVGF are obtained by Proposition~\ref{prop:NVGFoptimal}, while the filter taps $\stH = \{\vch_{\ell}\}_{\ell=1}^L$ of the LSIGF are learned from data.


Alternatively, the filter taps of the NVGF replacing the nonlinear activation function can also be learned from data, together with the filter taps of the LSIGF:
\begin{equation} \label{eq:learnNVGF}
    \vcx_{\ell} = \fnH_{\ell}^{\text{nv}} \big( \fnH_{\ell}^{\text{lsi}} (\vcx_{\ell}; \mtS, \vch_{\ell}); \mtS, \mtH_{\ell} \big)
\end{equation}
where the filter taps to be learned are $\stH = \{(\vch_{\ell},\mtH_{\ell})\}_{\ell=1}^L$. This approach avoids the need to estimate first and second moments. Additionally, it allows the NVGF to learn how to create frequency content tailored to the application at hand, instead of just approximating the chosen nonlinear activation function. We note that while the increased number of parameters may lead to overfitting, this can be tackled by dropout. This architecture is termed ``Learn NVGF''.



\section{Numerical Experiments} \label{sec:sims}



\ifundefined{arXiv}
The objective of the numerical experiment is to isolate the impact that frequency creation has on the overall performance. To do this, we focus on the GSP problem of authorship attribution, which is a problem with signals known to have high-frequency content \cite{Segarra2015-Authorship}.
\else
The objectives of the experiments are twofold. First, they aim to show how the architecture obtained by replacing the nonlinear activation function with a NVGF performs when compared to other popular GNNs. The second objective is to isolate the impact that frequency creation has on the overall performance. Due to space constraints, we focus on the problem of authorship attribution \cite{Segarra2015-Authorship}. Other applications can be found in the supplementary material, where the observations are summarized at the end of this section for reference.
\fi

\textbf{Problem statement.} In the problem of authorship attribution, the goal is to determine whether a given text has been written by a certain author, relying on other texts that the author is known to have written. To this end, word adjacency networks (WANs) are leveraged. WANs are graphs that are built by considering function words (i.e., words without semantic meaning) as nodes, and considering their co-occurrences as edges \cite{Segarra2015-Authorship}. As it happens, the way each author uses function words gives rise to a stylistic signature that can be used to attribute authorship. In what follows, we address this problem by leveraging previously constructed WANs as graphs, and the frequency count (histogram) of function words as the corresponding graph signals.

\textbf{Dataset.} For illustrative purposes, in what follows, works by Jane Austen are considered.
Attribution of other $\text{19}^{\text{th}}$ century authors can be found in the supplementary material.
A WAN consisting of $189$ nodes (function words) and $9812$ edges is built from texts belonging to a given corpus considered to be the training set. These texts are partitioned into segments of approximately $1000$ words each, and the frequency count of those $189$ function words in each of the texts is obtained. These represent the graph signals $\vcx$ that are considered to be part of the training set. Each of these is assigned a label $1$ to indicate that they have been authored by Jane Austen. An equal number of segments from other contemporary authors are randomly selected, and then their frequency count is computed and added to the training set with the label $0$ to indicate that they have not been written by Jane Austen. The total number of labeled samples in the training set is $1464$, of which $118$ are left aside for validation. The test set is built analogously by considering other text segments that were not part of the training set (and thus, not used to build the WAN either), totaling $78$ graph signals (half corresponding to texts authored by Jane Austen, and half corresponding to texts by other contemporary authors).

\textbf{Architectures and training.} For the first experiment, we compare the Learn NVGF architecture \eqref{eq:learnNVGF} with arguably three of the most popular non-GSP GNNs, namely, GCN \cite{Kipf2017-GCN}, SGC \cite{Weinberger2019-SGC}, and GAT \cite{Velickovic2018-GAT}. Note that the Learn NVGF is an entirely linear architecture, but one capable of creating frequencies due to the nature of the NVGF. The filter taps of both the LSIGF and the NVGF are learned from data. The other three architectures are nonlinear since they include a ReLU activation function after the first filtering layer. All architectures include a learnable linear readout layer. Dropout with a probability $0.5$ is included after the first layer. All architectures are trained for $25$ epochs with a batch size of $20$, using the ADAM algorithm \cite{Kingma15-ADAM} with the forgetting factors $0.9$ and $0.999$, respectively, as well as a learning rate $\eta$. The value of the learning rate $\eta$, the number of hidden units $F$, and the number of filter taps $K$ are chosen by exhaustive search over triplets $(\eta, F, K)$ in the set $\{0.001, 0.005, 0.01\} \times \{16, 32, 64\} \times \{2, 3, 4\}$.

\begin{figure*}
    \centering
    \subfloat[LSIGF]{%
        \label{subfig:austen:lsigf}
        \includegraphics[width=0.55\columnwidth]{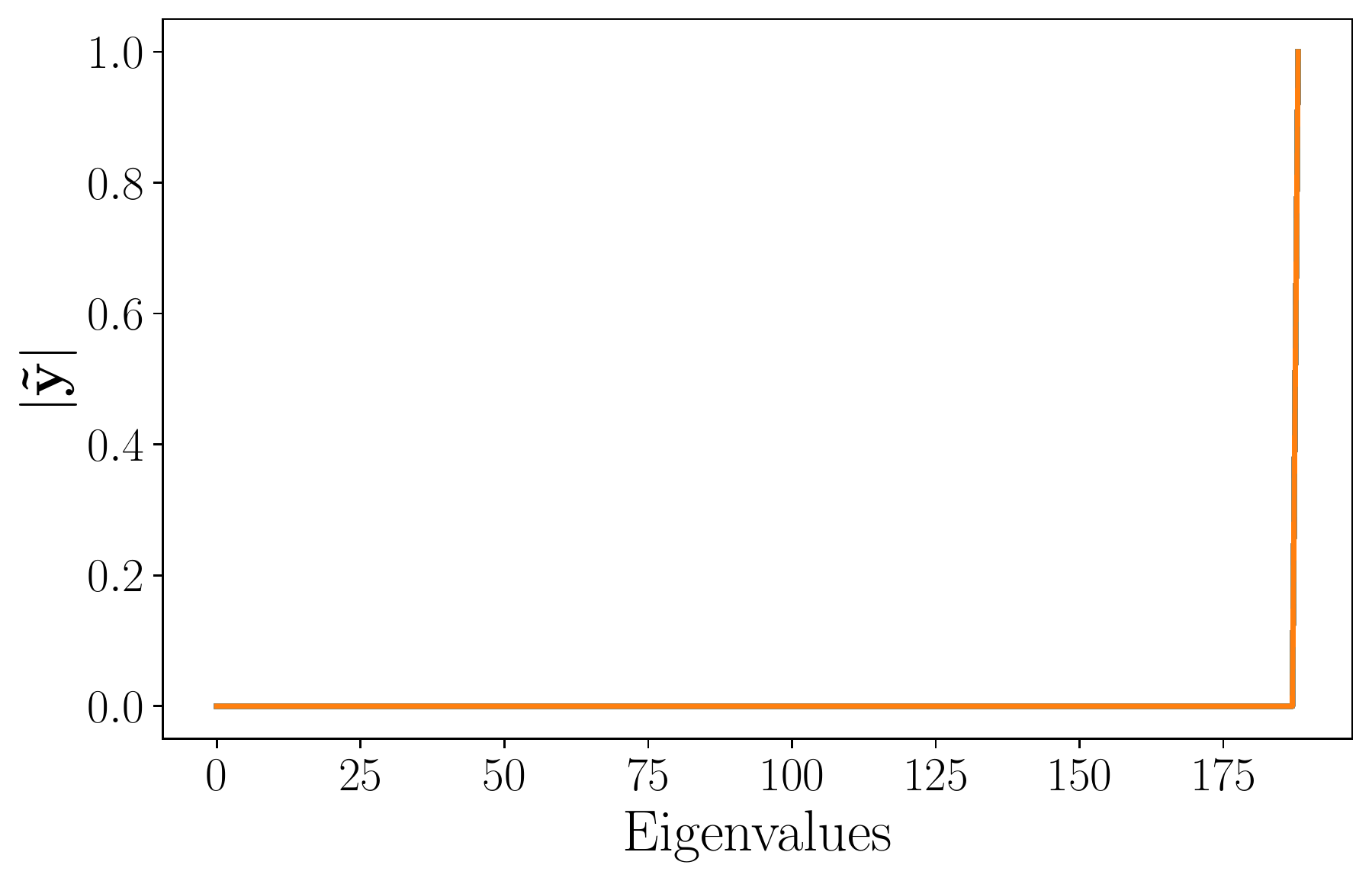}
    }
    \hfill
    \subfloat[GCNN]{%
        \label{subfig:austen:gcnn}
        \includegraphics[width=0.55\columnwidth]{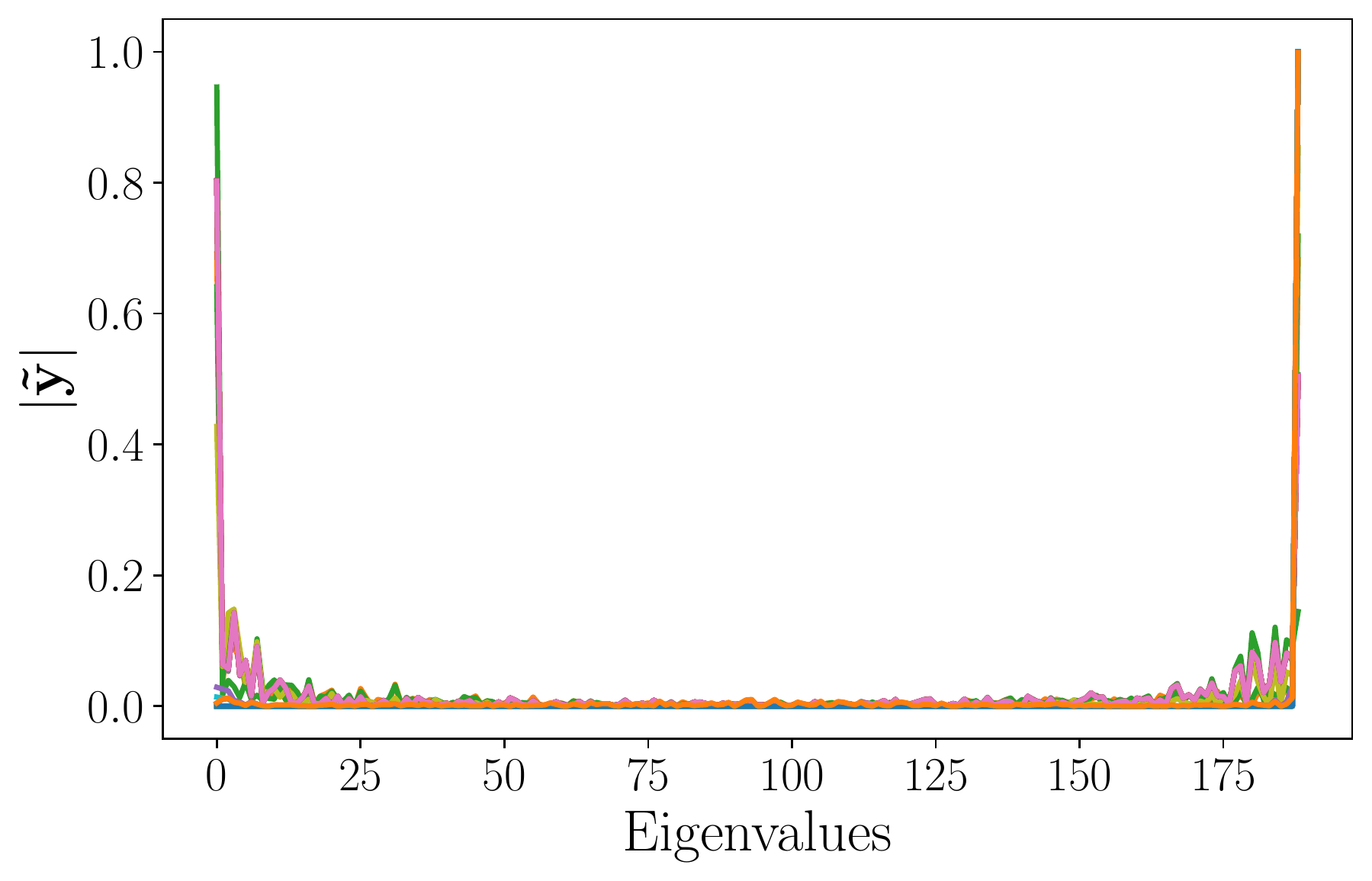}
    }
    \hfill
    \subfloat[Learn NVGF]{%
        \label{subfig:austen:nvgf}
        \includegraphics[width=0.55\columnwidth]{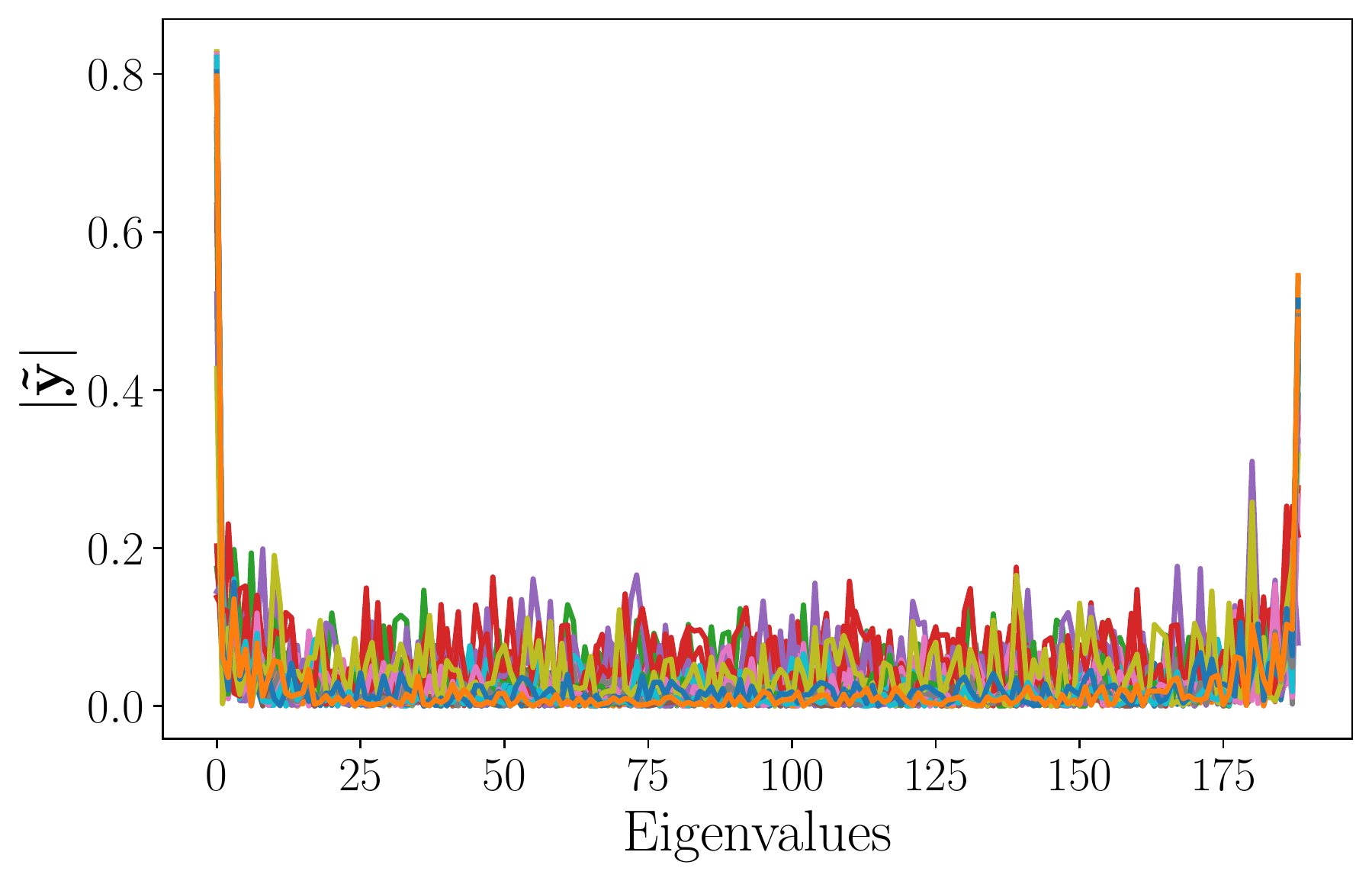}
    }
    \caption{Frequency responses to a single high-frequency input $\vcx = \vcv_{N}$. The frequency responses include the outputs of all $F=64$ filters.
    }
    \label{fig:austen:freq}
\end{figure*}

\textbf{Experiment 1: Performance comparison.} The objective of this first experiment is to illustrate that the performance of the Learn NVGF is comparable to the performance of popular (non-GSP) GNNs. The best results for each architecture are shown in Figure~\ref{fig:austen:comparison}, where the classification error was averaged over $10$ random splits of texts that are assigned to the training and test sets. One third of the standard deviation is also shown. It is observed that the Learn NVGF architecture has a comparable performance. It is emphasized that the objective of this paper is not to achieve state-of-the-art performance, but to offer insight on the role of nonlinear activation functions in frequency creation and how this translates to performance, as discussed next.
\ifundefined{arXiv}
\else
Among the three popular architectures, GAT ($\eta = 0.01$, $F=64$) exhibits the best performance, which is $102\%$ better than SGC ($\eta = 0.005$, $F=64$, $K=2$) and $30\%$ better than GCN ($\eta = 0.01$, $F=64$). This is expected since the graph filters involved in GAT are non-convolutional, and thus have more expressivity. Yet, when compared to the linear Learn NVGF ($\eta=0.001$, $F=32$, $K=3$), it is observed that the latter exhibits $20\%$ better performance than the GAT.
\fi

\textbf{Experiment 2: The role of frequency creation.} For the second experiment, we compare the Learn NVGF of the previous experiment with (i) a simple LSIGF, (ii) a Design NVGF as in \eqref{eq:designNVGF}, and (iii) a GCNN as in \eqref{eq:GCNN}. The same values of $(\eta=0.001, F=32, K=3)$ are used for all architectures as a means of fixing all other variabiliity except for the nonlinearity/frequency creation. The results are shown in Figure~\ref{fig:austen:change}. Note that the GCNN in \eqref{eq:GCNN} can be interpreted as a stand-in for ChebNets \cite{Defferrard2016-ChebNets}, arguably the most popular GSP-based GNN architecture.

\textbf{Discussion.} First, note that the LSIGF, Design NVGF, and Learn NVGF architectures are linear, whereas the GCNN is not. The LSIGF, however, is not capable of creating frequencies, while the other three are, albeit through different mechanisms. Second, Figure~\ref{fig:austen:change} shows the percentage difference in performance with respect to the base architecture (the GCNN). Essentially, the difference in performance between the LSIGF and the Learn NVGF can be pinned down to the frequency creation, because both architectures are linear, while the difference between the Learn NVGF and the GCNN can be tied to the nonlinear nature of the GCNN. It is thus observed that the LSIGF performs considerably worse than the Learn NVGF and the GCNN, which perform the same. It is observed that the Design NVGF performs halfway between the GCNN and the LSIGF. The Design NVGF depends on the ability to accurately estimate the first and second moments from the data, and this has an impact on its performance. In any case, it is noted that this experiment suggests that the main driver of improved performance is the frequency creation and not necessarily the nonlinear nature of the GCNN. 

From a qualitative standpoint, the average frequency response of the signals in the test set is shown in Figure~\ref{fig:austen:input}. Since the high-eigenvalue content is significant, it is expected that the ability to better process this content will impact the overall performance. This explains the relatively poor performance of the LSIGF. In Figure~\ref{fig:austen:freq}, we show the frequency response of the output for each of the three architectures (LSIGF, Learn NVGF, and GCNN) when the input has a single high frequency, i.e., $\vcx = \vcv_{N}$ so that $\vctx = \vce_{N}$. Figure~\ref{subfig:austen:lsigf} shows that the output frequency response of the LSIGF exhibits a single frequency, the same as the input. Figure~\ref{subfig:austen:gcnn} shows that the output frequency response of the GCNN has content in all frequencies, but most notably a low-frequency peak appears. Figure~\ref{subfig:austen:nvgf} shows that the output frequency response of the Learn NVGF contains all frequencies, in a much more spread manner than the GCNN.

\textbf{General observations.} In the supplementary material, a similar analysis is carried out for $21$ other authors. Additionally, it is noted that Jane Austen is representative of the largest group (consisting of $11$ authors) where the Learn NVGF and the GCNN have similar performance and are better than the LSIGF. For $7$ other authors, the Learn NVGF actually performs better than the GCNN. Finally, for the remaining $3$ authors, there is no significant difference between the performance of the LSIGF, the GCNN, and the Learn NVGF, which implies that the high-frequency eigenvalue content is less significant for these authors. \ifundefined{arXiv} \else Additionally, the problem of movie recommendation is considered \cite{Harper2016-MovieLens}. First, it is observed that the Learn NVGF exhibits better performance than the methods in \cite{Monti2017-RecommendationGNN, Levie2018-CayleyNets} and the nearest neighbor algorithm. Second, it is noted that the input signals do not have significant high-eigenvalue content and thus the LSIGF, the GCNN, and the Learn NVGF exhibit similar performance. For a detailed analysis, please see the supplementary material. \footnote{It is noted that the popular benchmark of semi-supervised learning over citation networks does not fit the empirical risk minimization framework nor the graph signal processing framework, and thus, it does not admit a frequency analysis, precluding their use in this work.}

\textbf{Problems beyond processing graph signals.} Two popular tasks that utilize graph-based data are semi-supervised learning and graph classification. The former involves a framework where each sample represents a node in the graph, i.e., $\vcx \in \fdR^{F}$ for $F$ features instead of $\vcx \in \fdR^{N}$ for $N$ nodes. This implies that the notion of graph frequency response used in \eqref{eq:GFT} does not translate to the semi-supervised setting. Therefore, extending these results to this problem requires careful determination of the notion of frequency. For the graph classification problem, each sample in the dataset represents a signal together with a graph, and the graph associated to each sample is usually different. In this case, while the notion of frequency is properly defined for each sample, an overarching frequency response for all the samples is not. Possible extensions of this framework to the graph classification problem would entail redefining a common notion of frequency response for the dataset, for example, using graphons.
\fi

\section{Conclusion} \label{sec:conclusions}



The objective of this work was to study the role of frequency creation in GSP problems. To do so, nonlinear activation functions (which theoretical findings suggest give rise to frequency creation) are replaced by NVGFs, which are also capable of creating frequencies, but in a linear manner. In this way, frequency creation was decoupled from the nonlinear nature of activation functions. Numerical experiments show that the main driver of improved performance is frequency creation and not necessarily the nonlinear nature of GCNNs. As future work, we are interested in extending this frequency analysis to non-GSP related problems such as semi-supervised node classification or graph classification problems, which require a careful definition of a notion of frequency.
\ifundefined{arXiv}
\else We discussed the caveats of extending this framework to include semi-supervised learning and graph classification problems, relating to the need of defining an appropriate notion of graph frequency. This opens up an interesting area of future research. We also note that it would be possible to use shift-variant filters to leverage this framework when using CNNs.\fi


\bibliographystyle{bibFiles/IEEEtranD}
\bibliography{bibFiles/myIEEEabrv,bibFiles/biblioNVGF}


\clearpage
\pagenumbering{arabic}
\renewcommand*{\thepage}{A\arabic{page}}
\newpage

\ifundefined{arXiv}
\begin{center}
    {\Large Supplementary Material}
\end{center}
\else
\fi

\appendix

\ifundefined{arXiv}

In what follows, we present additional simulations on the authorship attribution problem. The proofs, as well as further additional simulations on the problem of movie recommendations, can be found online\footnote{\url{https://anonymous.4open.science/r/nvgf-B8B2/}}.


\begin{figure*}
    \centering
    \includegraphics[width=0.9\textwidth]{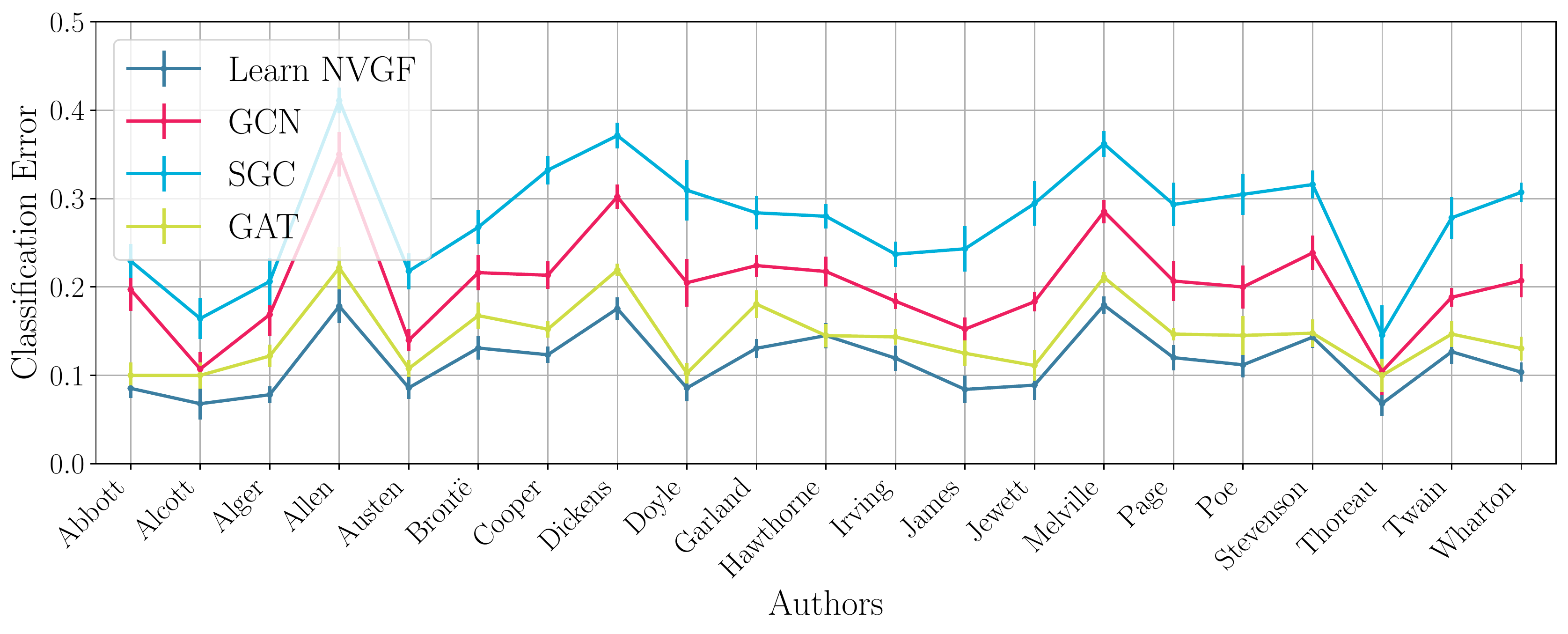}
    \caption{Comparison between architectures for all authors. It is observed that, in most cases, the Learn NVGF exhibits significantly better performance than the GAT \cite{Velickovic2018-GAT}, which in turn is better than the GCN \cite{Kipf2017-GCN}, and all of them are better than the SGC \cite{Weinberger2019-SGC}. The error bars reflect one third of the estimated standard deviation.}
    \label{fig:app:allAuthors:compare}
\end{figure*}

\textbf{Problem objective.} Consider a set $\stTheta_{a}$ of texts that are known to be written by author $a$. Given a new text $\theta \notin \stTheta_{a}$ the objective is to determine whether $\theta$ was written by $a$ or not.

\textbf{Approach.} The approach is to leverage word adjacency networks (WAN) built from the set of known texts $\stTheta_{a}$ to build a graph support $\stG$, and then use the word frequency count of the function words in $\theta$ as the graph signal $\vcx$. Then, $\vcx$ is processed through a GNN $\fnPhi$ (in any of the variants discussed in Section~\ref{sec:archit}) to obtain a predictor of the text being written by author $a$. See \cite{Segarra2015-Authorship} for details.

\textbf{Dataset and code.} The dataset is presented in \cite{Segarra2015-Authorship}, and is publicly available at \url{http://github.com/alelab-upenn/graph-neural-networks/tree/master/datasets/authorshipData}. The dataset consists of $21$ authors from the $\text{19}^{\text{th}}$ century. The corpus of each author is split in texts of about $1000$ words. The WANs and the word frequency count for each text are already present in the dataset. The texts are split, at random, in $95\%$ for training and $5\%$ for testing. The weights of the WANs of the $95\%$ texts selected for training are averaged to obtain an average WAN from which the GMD $\mtS$ is obtained. The resulting support matrix is further normalized to have unit spectral norm. Note that no text from the test set is used in building the WAN. From the texts in the training set, $8\%$ are further separated to build the validation set. Denote by $N_{a}^{\text{train}}$, $N_{a}^{\text{valid}}$, and $N_{a}^{\text{test}}$ the number of texts in the training, validation, and test set, respectively.  The word frequency counts for each text are normalized and used as graph signals. A label of $1$ is attached to these signals to indicate that they have been written by author $a$ in a supervised learning context. To complete the datasets, an equal number of texts are obtained at random from other contemporary authors, and their frequency word counts are normalized and incorporated into the corresponding sets and assigned a label of $0$. In this way, the resulting training, validation, and test set have $2N_{a}^{\text{train}}$, $2N_{a}^{\text{valid}}$, and $2N_{a}^{\text{test}}$ samples, respectively (half of them labeled with $1$ and the other half with $0$). The code to run the simulations can be found online.

\textbf{Training.} The loss function during training is a cross-entropy loss between the logits obtained from the output of each architecture, and the labels in the training set. All the architectures are trained by using an ADAM optimizer \cite{Kingma15-ADAM} with forgetting factors $0.9$ and $0.999$, and with a learning rate $\eta$. The training is carried out for $25$ epochs with batches of size $20$. Dropout with probability $0.5$ is included before the readout layer, during training, to avoid overfitting. At testing time, the weights are correspondingly rescaled. Validation is run every $5$ training steps. The learned filters that result in the best performance on the validation set are kept and used during the testing phase. For each experiment, $10$ realizations of the random train/test split are carried out (also randomizing the selection of the texts by other authors that complete the training, validation, and test sets). The average evaluation performances (measured as classification error---ratio of texts wrongly attributed in the test set) is reported, together with the estimated standard deviation.

\textbf{Hyperparameter selection.} The number of hidden units $F_{\text{NVGF}}$, $F_{\text{GCN}}$, $F_{\text{SGC}}$, and $F_{\text{GAT}}$, the polynomial order $K_{\text{NVGF}}$ and $K_{\text{SGC}}$, and the learning rate $\eta_{\text{NVGF}}$, $\eta_{\text{GCN}}$, $\eta_{\text{SGC}}$, and $\eta_{\text{GAT}}$ are selected, independently for each architecture, from the set $\{16,32,64\}$ for the number of hidden units, $\{2, 3, 4\}$ for the polynomial order, and $\{0.001, 0.005, 0.01\}$ for the learning rate. In other words, all possible combinations of these three parameters are run for each architecture, and the ones that show the best performance on the test set are kept.

\begin{figure*}
    \centering
    \includegraphics[width=0.9\textwidth]{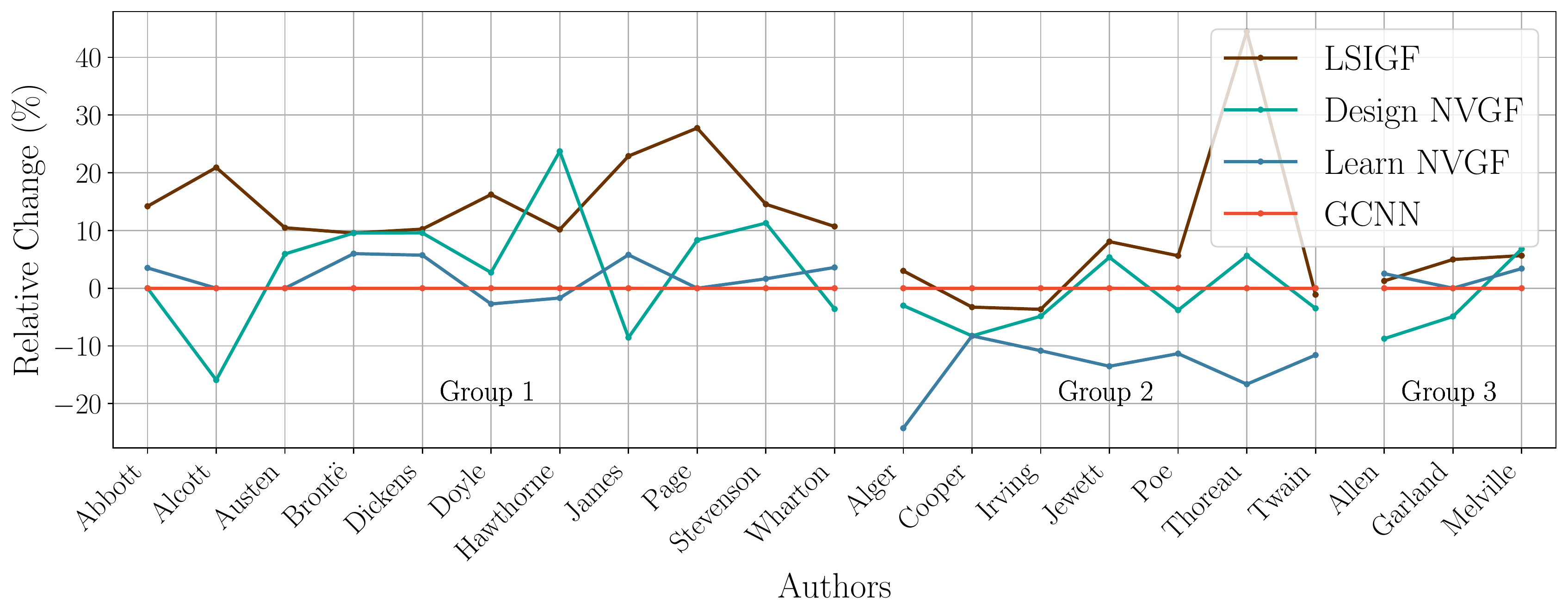}
    \caption{Relative change in performance with respect to the GCNN baseline, divided into $3$ groups of similar behavior. Group 1 ($53\%$ of the authors) includes those where the Learn NVGF has a comparable performance to the GCNN and both of them are better than the LSIGF, showing that frequency creation plays a vital role in improving performance. Group 2 ($33\%$ of the authors) shows that oftentimes, the Learn NVGF can improve significantly over the GCNN, suggesting that the nonlinear nature of the mapping may have a negative impact. Group 3 ($14\%$ of the authors) consists of those cases when the Learn NVGF, the GCNN, and the LSIGF all exhibit comparable performance.}
    \label{fig:app:allAuthors:change}
\end{figure*}

\textbf{Experiment 1: Performance comparison.} In the first experiment, the performance is measured by classification error (ratio of texts in the test set that were wrongly attributed), and the comparison between the Learn NVGF, the GCN \cite{Kipf2017-GCN}, the SGC \cite{Weinberger2019-SGC}, and the GAT \cite{Velickovic2018-GAT} is carried out, for all $21$ authors. The hyperparameters used for each architecture and each author where selected as those that offer the best performance, and can be found online.

Results are shown in Figure~\ref{fig:app:allAuthors:compare}. The general trend observed is for the Learn NVGF to exhibit better performance than the GAT, which in turn is better than the GCN, and all of them are better than SGC. The differences are usually significant between the four architectures, with a marked improvement by the Learn NVGF. It is observed, however, that for Doyle, Hawthorne, and Stevenson, the performance of the Learn NVGF and the GAT is comparable. In any case, it is emphasized that the goal of this experiment is not to achieve state-of-the-art performance, but to show that the performance is comparable to the most popular GNN architectures. The objective of this work is to analyze the role of frequency creation, and decouple it from the effect of nonlinear activation functions, as discussed next.

\textbf{Experiment 2: Impact of nonlinearities.} In the second experiment, the objective is to decouple the contribution made by frequency creation from that made by the nonlinear nature of the architecture. To do this, the GCNN architecture is taken as a baseline (a nonlinear, frequency-creating architecture), and the relative change in performance of the three other architectures is measured (Learn NVGF and Design NVGF, both linear and frequency-creating architectures, and LSIGF, which is linear but cannot create frequencies). The learning rate $\eta$, the number of hidden units $F$, and order of the filters $K$ are the same for all four architectures. The results showing relative change in the mean classification error with respect to the GCNN are shown in Figure~\ref{fig:app:allAuthors:change}.

The authors have been classified into three groups according to their relative behavior. Group 1 consists of those authors where the Learn NVGF has a comparable performance with respect to the GCNN (i.e., $5\%$ or less relative variation in performance), and both have a considerably better performance than the LSIGF. These results suggest that, for this group of authors (the most numerous one, consisting of $11$ authors, or $53\%$ of the total), the improvement in performance is mostly due to the capability of the architectures to create frequency, and not necessarily due to the nonlinear nature of the GCNN.

Group 2 consists of those authors where the Learn NVGF exhibits a better performance than the GCNN, which in turn exhibits a similar performance to the LSIGF (except for Jewett and Thoreau, where the GCNN still exhibits considerably better performance than the LSIGF). This group of 7 authors ($33\%$ of the total) suggests that in some cases, frequency creation is the key contributor to improved performance, and that the inclusion of a nonlinear mapping could possibly have a negative impact. In essence, we observe that a linear architecture capable of creating frequencies outperforms the rest, and that a frequency-creating nonlinear architecture performs similarly to a linear architecture that cannot create frequencies. This suggests that the relationship between input and output is approximately linear with frequencies being created, but that attempting to model this frequency creation with a nonlinear architecture is not a good approach.

Finally, Group 3 consists of those architectures for which all architectures exhibit a similar performance. This group consists of $3$ authors ($14\%$ of the total). This may be explained by the fact that the high-frequency content for these authors does not carry useful information, and thus the role of frequency creation is less relevant.

With respect to the Design NVGF, a somewhat erratic behavior is observed. In near half of the cases ($11$ authors), the performance of the Design NVGF closely resembles the performance of the GCNN, as expected. In a few other cases (Alcott, James, Cooper, and Allen), the Design NVGF results are better, and in the rest they are considerably worse (Bront\"{e}, Dickens, Hawthorne, Page, Stevenson, and Melville). The Design NVGF architecture is designed to mimic the GCNN, but its accurate design depends on good estimates of the first and second moments of the data. Thus, one possible explanation is that there is not enough data to get good estimates of these values. Another possible explanation is that higher-order moments have a larger impact in these cases, and the NVGF, being linear, is not able to accurately capture them.

Overall, this second experiment shows the importance of frequency-creation in improving performance, especially when high-frequency content is significant. Among the two ways of creating frequency (linear and nonlinear), we see that in most cases, they essentially perform the same. But there are cases in which creating frequency content in a linear manner is better (Group 2). In any case, this experiment shows the importance of frequency creation and calls for further research on what other contributions the nonlinear nature of the activation function has on performance.
\else



\section{Frequency Response of Node-Variant Graph Filters}

\begin{proof}[Proof of Proposition~\ref{prop:GFToutputNVGF}]
The output graph signal $\vcy \in \fdR^{N}$ of a node-variant graph filter is given by \eqref{eq:NVGF}, which is reproduced here for ease of exposition:
\begin{equation} \label{eq:app:NVGF}
    \vcy = \sum_{k=0}^{K} \diag(\vch^{(k)}) \mtS^{k} \vcx.
\end{equation}
Recall that $\vch^{(k)} \in \fdR^{N}$ is the $(k+1)^{\text{th}}$ column of the matrix $\mtH \in \fdR^{N \times (K+1)}$ containing the $N(K+1)$ filter taps, that $\mtS \in \fdR^{N \times N}$ is the support matrix describing the graph, and that $\vcx \in \fdR^{N}$ is the input graph signal. As given by \eqref{eq:GFT}, the spectral representation $\vctx \in \fdR^{N}$ of the input graph signal are the coordinates of representing $\vcx$ on the eigenbasis $\{\vcv_{i}\}_{i=1}^N$ of the support matrix $\mtS$, i.e., $\vctx = \mtV^{\Tr} \vcx$. Similarly, $\vcty = \mtV^{\Tr} \vcy$. Therefore, by \eqref{eq:app:NVGF} together with the fact that $\mtS  = \mtV \diag(\vclambda) \mtV^{\Tr}$, it holds that
\begin{equation} \label{eq:app:NVGFoutputGFT}
\begin{aligned}
    \vcty & = \mtV^{\Tr} \vcy = \mtV^{\Tr} \sum_{k=0}^{K} \diag(\vch^{(k)}) \mtS^{k} \vcx \\ & = \mtV^{\Tr} \sum_{k=0}^{K} \diag(\vch^{(k)}) \mtV \diag(\vclambda^{k}) \vctx
\end{aligned}
\end{equation}
where $\vclambda^{k}$ is the $N$-vector with $i^\text{th}$ element given by $[\vclambda^{k}]_{i} = \lambda_{i}^{k}$. Denoting $h_{ik} = [\vch^{(k)}]_{i} = [\mtH]_{i(k+1)}$, note that $\diag(\vch^{(k)})\mtV \diag(\vclambda^{k}) \in \fdR^{N \times N}$, so that
\begin{equation}  \label{eq:app:diagVdiag}
\begin{aligned}
    \sum_{k=0}^{K} [\diag(& \vch^{(k)})\mtV \diag(\vclambda^{k})]_{ij}  = \sum_{k=0}^{K} h_{ik} \lambda_{j}^{k} v_{ij} \\ & = v_{ij} \sum_{k=0}^{K} h_{ik} \lambda_{j}^{k} = v_{ij} \fnth_{i}(\lambda_{j})
\end{aligned}
\end{equation}
where $\fnth_{i}(\lambda) = \sum_{k=0}^{K} h_{ik} \lambda^{k}$ is the frequency response of the filter taps at node $\lmv_{i}$; see \eqref{eq:freqResponse}. Next, observe that the filter taps for each node are collected in the rows of $\mtH$. Therefore, in analogy to \eqref{eq:GFTfilter}, it holds that $\mtH \mtLambda^{\Tr} \in \fdR^{N \times N}$, with
\begin{equation} \label{eq:app:LambdaH}
    [\mtH \mtLambda^{\Tr}]_{ij} = \sum_{k=0}^{K} h_{ik} \lambda_{j}^{k} = \fnth_{i} (\lambda_{j}).
\end{equation}
Substituting \eqref{eq:app:LambdaH} into \eqref{eq:app:diagVdiag} gives that
\begin{equation}
    \sum_{k=0}^{K} [\diag(\vch^{(k)})\mtV \diag(\vclambda^{k})]_{ij} = [\mtV \circ (\mtH \mtLambda^{\Tr})]_{ij}
\end{equation}
so that \eqref{eq:app:NVGFoutputGFT} becomes
\begin{equation}
    \vcty = \mtV^{\Tr} \big( \mtV \circ (\mtH \mtLambda^{\Tr}) \big) \vctx.
\end{equation}
This completes the proof.
\end{proof}

\begin{proof}[Proof of Corollary \ref{cor:NVGFnewFreq}]
The $j^{\text{th}}$ element of the graph Fourier transform of the output, $\vcty$, is
\begin{equation} \label{eq:app:gftOutput}
    \scty_{j}
    = \sum_{i=1}^{N} \sctx_{i} \vcv_{j}^{\Tr} \diag(\vcfnth(\lambda_{i})) \vcv_{i}
\end{equation}
where $\vcfnth(\lambda_i) = [\fnth_{1}(\lambda_i),\fnth_{2}(\lambda_i),\dots,\fnth_{N}(\lambda_i)] \in \fdR^{N}$ collects the frequency response of all nodes at eigenvalue $\lambda_{i}$. Note that, if $\scty_{j}$ depends on $\sctx_{i}$ for some $i \neq j$, then frequencies are created. When $\mtV^{\Tr} \big( \mtV \circ (\mtH \mtLambda^{\Tr}) \big)$ is not diagonal, there exists $i,j\in\{1,\ldots,N\}$ such that $i\ne j$ and $(\mtV^{\Tr} \big( \mtV \circ (\mtH \mtLambda^{\Tr}) \big))_{ij} = \vcv_{j}^{\Tr} \diag(\vcfnth(\lambda_{i})) \vcv_{i} \ne 0$. In this case, \eqref{eq:app:gftOutput} indeed shows that frequencies are created.

Now suppose that no frequency creation occurs. Then $\mtV^{\Tr} \big( \mtV \circ (\mtH \mtLambda^{\Tr}) \big)$ must be diagonal. Therefore, there exists $\alpha_1,\ldots,\alpha_N\in\mathbb{R}$ such that $\vcv_{j}^{\Tr} \diag(\vcfnth(\lambda_{i})) \vcv_{i} = \alpha_{i} \delta_{ij}$ for all $i,j$. Recalling that the set $\{\vcv_{i}\}_{i=1}^N$ of eigenvectors of $\mtS$ is an orthonormal basis, we have that $\vcv_{j}^{\Tr} \vcv_{i} = \delta_{ij}$, so it must be that $\vcv_{i}^{\Tr}(\diag(\vcfnth(\lambda_{i}))-\alpha_i \mtI) \vcv_{j} = 0$ for all $i,j$. This implies for all $i,j$ that $v_{ji}=0$ or $\fnth_{j}(\lambda_i)=\alpha_i$. In the case that $v_{ij}\ne 0$ for all $i,j$, it is clear that $\diag(\vcfnth(\lambda_{i})) = \alpha_{i} \mtI$, meaning that the frequency response at all nodes is the same. This restriction implies that the NVGF filter is a LSI graph filter. This concludes the proof.
\end{proof}


\section{Lipschitz Continuity of Node-Variant Graph Filters}

\begin{proof}[Proof of Theorem~\ref{thm:stability}]
Leveraging the fact that the filter taps of both $\mtfnH^{\text{nv}}(\mthS)$ and $\mtfnH^{\text{nv}}(\mtS)$ are the same, start by writing the difference between the filter outputs as
\begin{equation} \label{eq:app:outputDiff}
    \big( \mtfnH^{\text{nv}}(\mthS)-\mtfnH^{\text{nv}}(\mtS) \big) \vcx = \sum_{k=0}^{K} \diag(\vch^{(k)}) \big(\mthS^{k} - \mtS^{k}\big) \vcx.
\end{equation}
Let $\mtE = \mthS - \mtS$ and note that $\mtE$ is symmetric and satisfies $\| \mtE \| = \|\mthS - \mtS\| \leq \sceps$ by assumption. Recall that $(\mtS+\mtE)^{k} = \mtS^{k} + \sum_{r=0}^{k-1} \mtS^{r} \mtE \mtS^{k-r-1} + \mtC$ with $\mtC$ such that $\|\mtC\| \leq \sum_{r=2}^{k} \binom{k}{r} \|\mtE\|^{r}\|\mtS\|^{k-r}$. Leveraging this fact in \eqref{eq:app:outputDiff}, it yields
\begin{equation} \label{eq:app:outputDiffWithD}
\begin{aligned}
    \big( \mtfnH^{\text{nv}}& (\mthS)-\mtfnH^{\text{nv}}(\mtS) \big) \vcx  \\ & = \sum_{k=0}^{K} \diag(\vch^{(k)}) \sum_{r=0}^{k-1} \mtS^{r} \mtE \mtS^{k-r-1} \vcx + \mtD \vcx
\end{aligned}
\end{equation}
with $\mtD$ satisfying $\|\mtD\| = \bigOh(\|\mtE\|^{2})$, since the filter taps $\mtH$ define analytic frequency responses $\fnh_{i}(\lambda)$ with bounded derivatives for all $i \in \{1,\ldots,N\}$. The input graph signal $\vcx$ can be rewritten as $\vcx = \sum_{i=1}^{N} \sctx_{i} \vcv_{i}$ using the GFT for the support matrix $\mtS$. Then, \eqref{eq:app:outputDiffWithD} becomes
\begin{align} \label{eq:app:outputDiffGFT}
    & \big( \mtfnH^{\text{nv}}(\mthS)-\mtfnH^{\text{nv}}(\mtS) \big) \vcx \\ & = \sum_{i=1}^{N} \sctx_{i} \sum_{k=0}^{K} \diag(\vch^{(k)}) \sum_{r=0}^{k-1} \mtS^{r} \mtE \mtS^{k-r-1} \vcv_{i} + \sum_{i=1}^{N} \sctx_{i} \mtD \vcv_{i}. \nonumber
\end{align}

Using the fact that $\vcv_{i}$ is an eigenvector of $\mtS$ the first term in \eqref{eq:app:outputDiffGFT} can be conveniently rewritten as
\begin{equation} \label{eq:app:outputDiffGFTfirstTerm}
\begin{aligned}
    \sum_{i=1}^{N} \sctx_{i} & \sum_{k=0}^{K} \diag(\vch^{(k)}) \sum_{r=0}^{k-1} \mtS^{r} \mtE \mtS^{k-r-1} \vcv_{i} \\ & = \sum_{i=1}^{N} \sctx_{i} \sum_{k=0}^{K} \diag(\vch^{(k)}) \sum_{r=0}^{k-1}\lambda_{i}^{k-r-1}\mtS^{r} \mtE \vcv_{i}.
\end{aligned}
\end{equation}
Denoting by $\mtE = \mtU \diag(\vcm) \mtU^{\Tr}$ the eigendecomposition of $\mtE$ with $\mtE\vcu_{i} = m_{i}\vcu_{i}$ the corresponding eigenvectors $\vcu_{i}$ and eigenvalues $m_{i}$, \cite[Lemma 1]{Gama2020-Stability} states that $\mtE \vcv_{i} = m_{i} \vcv_{i} + \mtE_{U} \vcv_{i}$ with $\| \mtE_{U}\| \leq 8\sceps$ for an appropriate matrix $\mtE_{U}$ that depends on $\mtU$, $\mtS$, and $\mtE$. Using this in \eqref{eq:app:outputDiffGFTfirstTerm},
\begin{align}
    \sum_{i=1}^{N} & \sctx_{i} \sum_{k=0}^{K} \diag(\vch^{(k)}) \sum_{r=0}^{k-1}\lambda_{i}^{k-r-1}\mtS^{r} \mtE \vcv_{i} \nonumber \\ & =
    \sum_{i=1}^{N} \sctx_{i} m_{i} \sum_{k=0}^{K} \diag(\vch^{(k)}) \sum_{r=0}^{k-1}\lambda_{i}^{k-r-1}\mtS^{r} \vcv_{i} \label{eq:app:outputDiffGFTfirstTermFirst}\\
    & \quad + \sum_{i=1}^{N} \sctx_{i} \sum_{k=0}^{K} \diag(\vch^{(k)}) \sum_{r=0}^{k-1}\lambda_{i}^{k-r-1}\mtS^{r} \mtE_{U} \vcv_{i}\label{eq:app:outputDiffGFTfirstTermSecond}
\end{align}
is obtained. For the term \eqref{eq:app:outputDiffGFTfirstTermFirst}, note that $\mtS^{r} \vcv_{i} = \lambda_{i}^{r} \vcv_{i}$ and that $\sum_{r=0}^{k-1} \lambda_{i}^{k-r-1}\lambda_{i}^{r} = k \lambda_{i}^{k-1}$, so that the term \eqref{eq:app:outputDiffGFTfirstTermFirst} is equivalent to
\begin{equation} \label{eq:app:outputDiffGFTfirstTermFirstSolved}
\begin{aligned}
    \sum_{i=1}^{N} \sctx_{i} m_{i} & \sum_{k=0}^{K} k\diag(\vch^{(k)})\lambda_{i}^{k-1} \vcv_{i} \\
    & = \sum_{i=1}^{N} \sctx_{i} m_{i} \diag(\vcfnth'(\lambda_{i}))\vcv_{i}
\end{aligned}
\end{equation}
where $\vcfnth'(\lambda) \in \fdR^{N}$ is a vector where the $i^{\text{th}}$ entry is the derivative of the frequency response of node $\lmv_{i}$, i.e., $[\vcfnth'(\lambda)]_{i} = \fnth'_{i}(\lambda) = \frac{d}{d\lambda} \fnth_{i} (\lambda)$. To rewrite \eqref{eq:app:outputDiffGFTfirstTermSecond}, consider the following lemma, which is conveniently proved after completing the current proof.

\begin{lemma}\label{lem:app:GiMatrix}
For all $i\in\{1,\dots,N\}$, define $\mtG_{i} \in \fdR^{N \times N}$ by
\begin{equation}\label{eq:app:GiMatrix}
    [\mtG_{i}]_{tj} = \begin{cases}
        \fnth_{t}'(\lambda_{i}) & \text{ if }j=i \\
        \frac{\fnth_{t}(\lambda_{i})-\fnth_{t}(\lambda_{j})}{\lambda_{i}-\lambda_{j}} & \text{ if } j \neq i
    \end{cases}
\end{equation}
where $\fnth_{t}$ is the frequency response at node $\lmv_{t}$ and $\fnth'_{t}$ is its derivative. Then
\begin{equation} \label{eq:app:outputDiffGFTfirstTermSecondSolved}
\begin{aligned}
    \sum_{i=1}^{N} \sctx_{i} & \sum_{k=0}^{K} \diag(\vch^{(k)}) \sum_{r=0}^{k-1}\lambda_{i}^{k-r-1}\mtS^{r} \mtE_{U} \vcv_{i} \\ & = \sum_{i=1}^{N} \sctx_{i} \big( \mtV \circ \mtG_{i} \big) \mtV^{\Tr} \mtE_{U} \vcv_{i}.
\end{aligned}
\end{equation}
\end{lemma}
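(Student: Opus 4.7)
The plan is to prove the stronger matrix identity
\begin{equation} \label{eq:app:proposal:matrix}
    \sum_{k=0}^{K} \diag(\vch^{(k)}) \sum_{r=0}^{k-1}\lambda_{i}^{k-r-1}\mtS^{r} = \big( \mtV \circ \mtG_{i} \big) \mtV^{\Tr},
\end{equation}
from which \eqref{eq:app:outputDiffGFTfirstTermSecondSolved} follows by right-multiplying both sides by $\mtE_{U} \vcv_{i}$, multiplying by $\sctx_{i}$, and summing over $i$. Thus the argument reduces to a purely algebraic claim about polynomials of $\mtS$.

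First, I would compute the $(t,j)$ entry of the left-hand side of \eqref{eq:app:proposal:matrix}. Using the eigendecomposition $\mtS^{r} = \mtV \diag(\vclambda^{r}) \mtV^{\Tr}$, so that $[\mtS^{r}]_{tj} = \sum_{s=1}^{N} \lambda_{s}^{r} v_{ts} v_{js}$, and recalling $[\diag(\vch^{(k)})]_{tt} = h_{tk}$, the left-hand side becomes
\begin{equation}
    \sum_{s=1}^{N} v_{ts} v_{js} \sum_{k=0}^{K} h_{tk} \sum_{r=0}^{k-1} \lambda_{i}^{k-r-1} \lambda_{s}^{r}
\end{equation}
after interchanging the finite summations over $k$, $r$, and $s$.

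The key step is to evaluate the innermost geometric-type sum in closed form. Since the eigenvalues are assumed distinct, for $s \neq i$ the telescoping identity $a^{k} - b^{k} = (a-b)\sum_{r=0}^{k-1} a^{k-r-1} b^{r}$ gives $\sum_{r=0}^{k-1} \lambda_{i}^{k-r-1} \lambda_{s}^{r} = (\lambda_{i}^{k} - \lambda_{s}^{k})/(\lambda_{i} - \lambda_{s})$, and for $s = i$ a direct count yields $k \lambda_{i}^{k-1}$. Summing against $h_{tk}$ and recognising $\fnth_{t}(\lambda) = \sum_{k=0}^{K} h_{tk}\lambda^{k}$ together with its derivative, the inner sum collapses exactly to $[\mtG_{i}]_{ts}$ as defined in \eqref{eq:app:GiMatrix}.

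Finally, I would assemble the result: the $(t,j)$ entry of the left-hand side of \eqref{eq:app:proposal:matrix} equals $\sum_{s=1}^{N} v_{ts} [\mtG_{i}]_{ts} v_{js}$, which is precisely the $(t,j)$ entry of $(\mtV \circ \mtG_{i}) \mtV^{\Tr}$ since the Hadamard product inserts $[\mtG_{i}]_{ts}$ into the $(t,s)$ slot of $\mtV$ before the product against $\mtV^{\Tr}$. I expect the only subtle point to be the careful bookkeeping between the ``$s=i$'' diagonal case (derivative) and the ``$s \neq i$'' case (divided difference); both must be combined into a single matrix $\mtG_{i}$, which is exactly the purpose of the two-case definition in \eqref{eq:app:GiMatrix}. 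No estimate or approximation is needed---the statement is an exact identity---so the obstacle is purely notational rather than analytical.
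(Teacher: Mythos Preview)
Your proposal is correct and follows essentially the same approach as the paper: both reduce to the matrix identity \eqref{eq:app:proposal:matrix}, evaluate the inner geometric sum in the two cases $s=i$ (derivative) and $s\neq i$ (divided difference), and identify the result with $[\mtG_{i}]_{ts}$. The only cosmetic difference is that the paper factors $\mtS^{r}=\mtV\diag(\vclambda^{r})\mtV^{\Tr}$ first and then invokes the identity $\diag(\vca)\mtB\diag(\vcc)=\mtB\circ(\vca\vcc^{\Tr})$ to arrive at $\mtV\circ\mtG_{i}$, whereas you carry out the same computation entrywise; the substance is identical.
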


With Lemma \ref{lem:app:GiMatrix} in place, use \eqref{eq:app:outputDiffGFTfirstTermFirstSolved} in \eqref{eq:app:outputDiffGFTfirstTermFirst} and \eqref{eq:app:outputDiffGFTfirstTermSecondSolved} in \eqref{eq:app:outputDiffGFTfirstTermSecond}, and this in turn back into \eqref{eq:app:outputDiffGFT} to obtain
\begin{equation} \label{eq:app:outputDiffGFTsolved}
\begin{aligned}
    \big( \mtfnH^{\text{nv}} & (\mthS)-\mtfnH^{\text{nv}}(\mtS) \big) \vcx \\ & = \sum_{i=1}^{N} \sctx_{i} m_{i} \diag(\vcfnth'(\lambda_{i}))\vcv_{i} \\ & \quad + \sum_{i=1}^{N} \sctx_{i} \big( \mtV  \circ \mtG_{i} \big) \mtV^{\Tr} \mtE_{U} \vcv_{i} \\ & \quad  + \sum_{i=1}^{N} \sctx_{i} \mtD \vcv_{i}.
\end{aligned}
\end{equation}
Next, compute the norm of \eqref{eq:app:outputDiffGFTsolved} by applying the triangle inequality to compute the norms of each of the three summands. For the norm of the first term in \eqref{eq:app:outputDiffGFTsolved}, the triangle inequality gives
\begin{equation}
\begin{aligned}
    \Big\| \sum_{i=1}^{N} & \sctx_{i} m_{i} \diag(\vcfnth'(\lambda_{i}))\vcv_{i} \Big\| \\ & \leq \sum_{i=1}^{N} | \sctx_{i} | |m_{i}| \| \diag(\vcfnth'(\lambda_{i})) \| \| \vcv_{i}\|
\end{aligned}
\end{equation}
where the submultiplicative property of the operator norm was used to bound $\| \diag(\vcfnth'(\lambda_{i})) \vcv_{i}\| \leq \| \diag(\vcfnth'(\lambda_{i})) \| \| \vcv_{i}\|$. Leveraging that $\|\vcv_{i}\|=1$, that $\|\diag(\vcfnth'(\lambda_{i}))\| = \max_{j} |\fnth'_{j}(\lambda_{i})| \leq C$ by Lipschitz continuity, and that $|m_{i}| \leq \sceps$ by the hypothesis that $\|\mtS - \mthS\| \leq \sceps$, the norm can be further bounded as
\begin{equation}\label{eq:app:lipschitzIneq1}
\begin{aligned}
    \Big\| \sum_{i=1}^{N} \sctx_{i} m_{i} & \diag(\vcfnth'(\lambda_{i})) \vcv_{i} \Big\| \\ & \leq \sceps C \sum_{i=1}^{N} | \sctx_{i} | = \sceps C \| \vctx \|_{1} \\ & \leq \sceps C \sqrt{N} \| \vctx \|= \sceps C \sqrt{N} \| \vcx \|.
\end{aligned}
\end{equation}
Note that the inequality between the $1$-norm and the $2$-norm was used, as well as the fact that the GFT is a Parseval operator. For the norm of the second term in \eqref{eq:app:outputDiffGFTsolved}, the triangle inequality together with the submultiplicativity of the operator norm are used to get
\begin{equation}\label{eq:app:lipschitzIneq2}
\begin{aligned}
    \Big\| & \sum_{i=1}^{N} \sctx_{i} \big(\mtV \circ \mtG_{i}\big) \mtV^{\Tr} \mtE_{U} \vcv_{i} \Big\| \\ & \leq \sum_{i=1}^{N} | \sctx_{i} | \| \mtV \circ \mtG_{i} \| \| \mtV^{\Tr} \| \| \mtE_{U}\| \| \vcv_{i}\| \\ & \leq 8\sceps CN\sqrt{N} \| \vcx\|.
\end{aligned}
\end{equation}
For the last inequality to hold, denote by $\|\cdot\|_{\max}$ the entrywise maximum norm of a matrix, and note that $\|\vcV \circ \mtG_{i} \| \leq N \| \vcV \circ \mtG_{i}\|_{\max} = N \max_{t,j \in \{1,\ldots,N\}} (| [\vcV]_{tj}|\ |[\mtG_{i}]_{tj}| ) \leq NC$ for all $i \in \{1,\ldots,N\}$, by the Lipschitz continuity of the frequency responses and the fact that $\mtV$ is an orthogonal matrix, so $|[\vcV]_{tj}|\le 1$ for all $t$ and $j$. Also recall that $\|\mtE_{U}\| \leq 8 \sceps$. Finally, note that the inequality between the $1$-norm and $2$-norm of vectors together with the Parseval nature of the GFT were used. For the third term in \eqref{eq:app:outputDiffGFTsolved}, it simply holds that
\begin{equation}\label{eq:app:lipschitzIneq3}
\begin{aligned}
    \Big\| & \sum_{i=1}^{N} \sctx_{i} \mtD \vcv_{i} \Big\| = \Big\| \mtD \sum_{t=1}^{N} \sctx_{i} \vcv_{i} \Big\| = \| \mtD \vcx \| \\ & \leq \| \mtD \| \| \vcx \| \leq \bigOh(\sceps^{2}) \| \vcx\|.
\end{aligned}
\end{equation}
Substituting \eqref{eq:app:lipschitzIneq1}, \eqref{eq:app:lipschitzIneq2}, and \eqref{eq:app:lipschitzIneq3} into \eqref{eq:app:outputDiffGFTsolved} gives \eqref{eq:stability}, which concludes the proof.
\end{proof}

\begin{proof}[Proof of Lemma \ref{lem:app:GiMatrix}]
We have for all $i\in\{1,\ldots,N\}$ that
\begin{equation}\label{eq:app:lemEq1}
\begin{aligned}
    & \sum_{k=0}^{K} \diag(\vch^{(k)}) \sum_{r=0}^{k-1}\lambda_{i}^{k-r-1}\mtV \diag(\vclambda^{r}) \\ & = \sum_{k=0}^{K} \diag(\vch^{(k)}) \mtV \sum_{r=0}^{k-1}\lambda_{i}^{k-r-1} \diag(\vclambda^{r}).
\end{aligned}
\end{equation}
The last summation $\sum_{r=0}^{k-1}\lambda_{i}^{k-r-1} \diag(\vclambda^{r})$ is a diagonal matrix, where the $j^{\text{th}}$ element of the diagonal can be written as
\begin{equation}
\begin{aligned}
    \Big[& \sum_{r=0}^{k-1} \lambda_{i}^{k-r-1} \vclambda^{r} \Big]_{j} = \sum_{r=0}^{k-1} \lambda_{i}^{k-r-1}\lambda_{j}^{r} \\ & = [\vcgamma_{i}]_{j} \coloneqq \begin{cases}
    k \lambda_{i}^{k-1} & \text{ if } j=i \\
    \frac{\lambda_{i}^{k} - \lambda_{j}^{k}}{\lambda_{i}-\lambda_{j}} & \text{ if } j \neq i
    \end{cases}
\end{aligned}
\end{equation}
so that
\begin{equation}\label{eq:app:lemEq2}
\begin{aligned}
    & \sum_{k=0}^{K} \diag(\vch^{(k)}) \mtV \sum_{r=0}^{k-1}\lambda_{i}^{k-r-1} \diag(\vclambda^{r}) \\ & = \sum_{k=0}^{K} \diag(\vch^{(k)}) \mtV \diag(\vcgamma_{i}).
\end{aligned}
\end{equation}
Note that the $i$ subindex in $\vcgamma_{i}$ indicates that this is parametrized by $\lambda_{i}$, while each entry of this vector, i.e., the $j^{\text{th}}$ entry, actually depends on $\lambda_{j}$. Now, remark that since we have a full matrix $\mtV$ in between two diagonal matrices, the matrix product does not commute.

To continue simplifying the expressions, start by considering a vector $\vca \in \fdR^{N}$, a matrix $\mtB \in \fdR^{N \times N}$ and another vector $\vcc \in \fdR^{N}$. Observe that
\begin{align*}
    \big[ \diag(\vca) \mtB \diag(\vcc)\big]_{ij} & = a_{i}c_{j}b_{ij} \\
    \diag(\vca) \mtB \diag(\vcc) & = \mtB \circ (\vca \vcc^{\Tr}).
\end{align*}
Therefore, it can be written
\begin{equation}
    \sum_{k=0}^{K} \diag(\vch^{(k)}) \mtV \diag(\vcgamma_{i}) = \mtV \circ \Big( \sum_{k=0}^{K} \vch^{(k)} \vcgamma_{i}^{\Tr} \Big) = \mtV \circ \mtG_{i}
\end{equation}
where $\mtG_{i} \in \fdR^{N \times N}$ is the matrix defined in \eqref{eq:app:GiMatrix};
\begin{equation}
\begin{aligned}
    \sum_{k=0}^{K} h_{tk} [\vcgamma_{i}]_{j} & = \begin{cases}
        \sum_{k=0}^{K} h_{tk}k \lambda_{i}^{k-1} & \text{ if }j=i \\
        \sum_{k=0}^{K} h_{tk}\frac{\lambda_{i}^{k} - \lambda_{j}^{k}}{\lambda_{i}-\lambda_{j}} & \text{ if } j \neq i
    \end{cases}
    \\ & =
    \begin{cases}
        \fnth_{t}'(\lambda_{i}) & \text{ if }j=i \\
        \frac{\fnth_{t}(\lambda_{i})-\fnth_{t}(\lambda_{j})}{\lambda_{i}-\lambda_{j}} & \text{ if } j \neq i
    \end{cases}
    = [\mtG_{i}]_{tj}.
\end{aligned}
\end{equation}
Using this result in \eqref{eq:app:lemEq1} and \eqref{eq:app:lemEq2} gives that
\begin{equation}
    \sum_{k=0}^{K} \diag(\vch^{(k)}) \sum_{r=0}^{k-1}\lambda_{i}^{k-r-1}\mtV \diag(\vclambda^{r}) = \mtV \circ \mtG_{i}.
\end{equation}
This equality, together with the fact that $\mtS^{r} = \mtV \diag(\vclambda^{r}) \mtV^{\Tr}$, gives the desired result \eqref{eq:app:outputDiffGFTfirstTermSecondSolved}.
\end{proof}


\section{Optimal Unbiased Node-Variant Graph Filters}

\begin{proof}[Proof of Lemma~\ref{l:unbiased}]
The NVGF-based estimator is given by:
\begin{equation} \label{eq:app:NVGFestimator}
    \vchy = \mtfnH^{\text{nv}}(\mtS) \vcx + \vcc.
\end{equation}
This estimator is unbiased if and only if $\xp[\vchy] = \xp[\vcy] = \xp[\fnrho(\vcx)] = \vcmu_{\rho}$. By \eqref{eq:app:NVGFestimator} and linearity of expectation, this holds precisely when
\begin{equation}
    \xp[\vchy] = \mtfnH^{\text{nv}}(\mtS) \xp[\vcx] + \vcc = \mtfnH^{\text{nv}}(\mtS) \vcmu_{x} + \vcc = \vcmu_{\rho}.
\end{equation}
This is equivalent to the condition on $\vcc$ that
\begin{equation}
    \vcc = \vcmu_{\rho} - \mtfnH^{\text{nv}}(\mtS)\vcmu_{x},
\end{equation}
which completes the proof.
\end{proof}

\begin{proof}[Proof of Proposition~\ref{prop:NVGFoptimal}]
Given an unbiased estimator $\vchy = \mtfnH^{\text{nv}}(\mtS) ( \vcx - \vcmu_{x}) + \vcmu_{\rho}$, an optimal matrix $\mtH^{\opt} \in \fdR^{N \times (K+1)}$ of filter taps are the ones that minimize \eqref{eq:approx}. First, note that the estimator output at node $\lmv_{i}$, i.e., the $i^{\text{th}}$ entry of $\vchy$, is given by
\begin{equation} \label{eq:app:singleNodeOutput}
    \schy_{i} = [\vchy]_{i} = \vcu_{i}^{\Tr} \diag( \mtLambda \vch_{i}) \mtV^{\Tr} (\vcx-\vcmu_{x}) + \scmu_{\rho i}
\end{equation}
where $\scmu_{\rho i} = [\vcmu_{\rho}]_{i}$, $\vcu_{i} \in \fdR^{N}$ is the $i^{\text{th}}$ row of $\mtV$, and $\vch_{i} \in \fdR^{K+1}$ is the $i^{\text{th}}$ row of the matrix $\mtH$; see \cite{Segarra2017-GraphFilterDesign}. Note that if $\vca, \vcb, \vcc$ are three $N$-dimensional real vectors, then $\vca^{\Tr} \diag(\vcb) \vcc = \sum_{i=1}^{N} a_{i}b_{i}c_{i}$, which means that they commute, i.e., $\vca^{\Tr} \diag(\vcb) \vcc = \vcc^{\Tr} \diag(\vca) \vcb$. Using this fact in \eqref{eq:app:singleNodeOutput} yields
\begin{equation} \label{eq:app:singleNodeWithA}
    \schy_{i} = (\vcx - \vcmu_{x})^{\Tr} \mtV \diag(\vcu_{i}) \mtLambda \vch_{i} + \scmu_{\rho i} = \vcfna_{i}(\vcx)^{\Tr} \vch_{i} + \scmu_{\rho i}
\end{equation}
with
\begin{equation*}
    \vcfna_{i}(\vcx) = \mtLambda^{\Tr} \diag(\vcu_{i}) \mtV^{\Tr} (\vcx - \vcmu_{x}).
\end{equation*}

The objective function in \eqref{eq:approx} can be rewritten as
\begin{equation} \label{eq:app:approx}
    \xp \big[ \| \vchy - \vcy \|_2^{2} \big] = \xp \Big[ \sum_{i=1}^{N} (\schy_{i}-y_{i})^{2} \Big] = \sum_{i=1}^{N} \xp\big[ (\schy_{i}-y_{i})^{2} \big].
\end{equation}
Since each $\schy_{i}$ depends only on $\vch_{i}$, as indicated in \eqref{eq:app:singleNodeWithA}, minimizing \eqref{eq:app:approx} over $\mtH$ is equivalent to minimizing each of the summands in \eqref{eq:app:approx} over $\vch_{i}$. Therefore, \eqref{eq:approx} becomes equivalent to the following system of $N$ optimization problems over each of the rows of $\mtH$:
\begin{equation} \label{eq:app:singleApprox}
    \min_{\vch_{i} \in \fdR^{K+1}} \xp\big[ (\schy_{i}-y_{i})^{2} \big], \quad i \in \{1,\ldots,N\}.
\end{equation}
Substituting \eqref{eq:app:singleNodeWithA} into the $i^{\text{th}}$ objective function of \eqref{eq:app:singleApprox} gives
\begin{equation} \label{eq:app:singleObjective}
\begin{aligned}
    & \xp \big[ (\schy_{i}-y_{i})^{2} \big] = \xp \big[ \big(\vcfna_{i}(\vcx)^{\Tr} \vch_{i} + \mu_{\rho i} - \rho(x_{i}) \big)^{2} \big] \\
    & = \vch_{i}^{\Tr} \xp\big[\vcfna_{i}(\vcx) \vcfna_{i}(\vcx)^{\Tr}\big]\vch_{i} \\ & \quad - 2 \xp \big[ (\rho(x_{i})-\mu_{\rho i}) \vcfna_{i}(x)^{\Tr} \big]\vch_{i} \\ & \quad + \xp \big[ (\rho(x_{i})- \mu_{\rho i})^{2} \big].
\end{aligned}
\end{equation}
Now, with
\begin{align*}
    \mtR_{i} & = \xp \big[\vcfna_{i}(\vcx) \vcfna_{i}(\vcx)^{\Tr} \big] \\ & = \mtLambda^{\Tr} \diag(\vcu_{i}) \mtV^{\Tr} \mtC_{x} \mtV \diag(\vcu_{i}) \mtLambda, \label{eq:app:Ri} \\
    \vcp_{i} & = \xp \big[ (\rho(x_{i})-\mu_{\rho i}) \vcfna_{i}(x) \big] \\ & = \mtLambda^{\Tr} \diag(\vcu_{i}) \mtV^{\Tr} \xp \big[ (\rho(x_{i})- \mu_{\rho i}) (\vcx - \vcmu_{x}) \big] 
\end{align*}
the $i^\text{th}$ objective \eqref{eq:app:singleObjective} becomes
\begin{equation} \label{eq:app:paraboloid}
    \xp \big[ (\schy_{i}-y_{i})^{2} \big] = \vch_{i}^{\Tr} \mtR_{i} \vch_{i} - 2 \vcp_{i}^{\Tr} \vch_{i} + \xp \big[ (\rho(x_{i})-\mu_{\rho i})^{2} \big].
\end{equation}
Since $\mtR_{i}$ is a positive semidefinite matrix, \eqref{eq:app:paraboloid} is a convex quadratic function in $\vch_i$. Therefore, setting the gradient of this function to zero, it can be concluded that $\vch_i^{\opt}$ is a global minimizer of \eqref{eq:app:paraboloid} if and only if $2\mtR_i \vch_i^{\opt} - 2\vcp_i = \vcZeros$, or, equivalently,
\begin{equation}
    \mtR_{i} \vch_{i}^{\opt} = \vcp_{i}.
\end{equation}
This completes the proof.
\end{proof}

\section{Authorship Attribution}

\textbf{Problem objective.} Consider a set $\stTheta_{a}$ of texts that are known to be written by author $a$. Given a new text $\theta \notin \stTheta_{a}$ the objective is to determine whether $\theta$ was written by $a$ or not.

\textbf{Approach.} The approach is to leverage word adjacency networks (WAN) built from the set of known texts $\stTheta_{a}$ to build a graph support $\stG$, and then use the word frequency count of the function words in $\theta$ as the graph signal $\vcx$. Then, $\vcx$ is processed through a GNN $\fnPhi$ (in any of the variants discussed in Section~\ref{sec:archit}) to obtain a predictor of the text being written by author $a$.

\begin{figure*}
    \centering
    \includegraphics[width=0.9\textwidth]{figures/allAuthorsCompare.pdf}
    \caption{Comparison between architectures for all authors. It is observed that, in most cases, the Learn NVGF exhibits significantly better performance than the GAT \cite{Velickovic2018-GAT}, which in turn is better than the GCN \cite{Kipf2017-GCN}, and all of them are better than the SGC \cite{Weinberger2019-SGC}. The error bars reflect one third of the estimated standard deviation.}
    \label{fig:app:allAuthors:compare}
\end{figure*}

\textbf{Graph construction.} The WAN can be modeled by a graph $\stG$ where the set of nodes $\stV$ consists of function words (i.e., words that do not carry semantic meaning but express grammatical relationships among other words within a sentence, e.g., ``the'', ``and'', ``a'', ``of'', ``to'', ``for'', ``but''). The existence of an edge connecting words and the corresponding edge weight are determined as follows. Consider each text $\theta \in \stTheta_{a}$ and split it into a total of $S$ sentences $\{\fnomega_{\theta}^{s}\}_{s = 1}^S$ where each sentence $\fnomega_{\theta}^{s}: \fdN \to \stV \cup \{\emptyset\}$ gives the function word present in each position within a sentence, or $\emptyset$ if the word is not a function word. Then, given a discount factor $\alpha \in (0,1)$ and a window length $D$, the edge weight $w_{ij}$ between nodes $v_{i}$ and $v_{j}$ is computed as %
\begin{equation} \label{eq:app:authorWeights}
\begin{aligned}
    w_{ij} = \sum_{t : \theta_{t} \in \stTheta_{a}} & \sum_{s,e} \indFn \{\fnomega_{\theta_{t}}^{s} (e) = v_{i}\} \\ & \sum_{d=1}^{D} \alpha^{d-1} \indFn\{\fnomega_{\theta_{t}}^{s}(e+d) = v_{j}\}
\end{aligned}
\end{equation}
where $\indFn\{\stA\}$ is the indicator function that takes value $1$ when condition $\stA$ is met and 0 otherwise, see \cite{Segarra2015-Authorship}. Equation \eqref{eq:app:authorWeights} essentially computes each weight by going text by text $\theta_{t} \in \stTheta_{a}$ and sentence by sentence $s \in \{1,\ldots,S\}$, looking position by position $e$ for the corresponding word $\fnomega_{\theta_{t}}^{s}(e)$ to match the function word $v_{i}$. Once the word $v_{i}$ is matched, the following $D$ words in the window length are looked at and, if the $(e+d)^{\text{th}}$ word matches $v_{j}$, then the discounted weight $a^{d-1}$ is added. In this way, not only co-occurrence of words, but also their proximity counts in establishing the WAN. Note that the edge weight function \eqref{eq:app:authorWeights} is asymmetric, which results in a directed graph.

\textbf{Graph signal processing description.} The support matrix is chosen to be
\begin{equation} \label{eq:app:authorSupport}
    \mtS = \frac{1}{2}(\mtD^{-1} \mtW + \mtW^{\Tr} \mtD^{-1})
\end{equation}
where the $\mtW$ is the adjacency matrix with entry $(i,j)$ equal to $w_{ij}$ and $\mtD = \diag(\mtW \vcOnes)$ is the degree matrix. The operation $\mtD^{-1}\mtW$ normalizes the matrix by rows, and the support matrix $\mtS$ comes from symmetrizing the matrix by adopting one half of the weight on each direction. The graph signal $\vcx$ contains a normalized word frequency count for each word
\begin{equation} \label{eq:app:authorSignal}
    [\vcx]_{i} = \frac{\sum_{s,e} \indFn\{\fnomega_{\theta}^{s}(e) = v_{i}\}}{\sum_{j : v_{j} \in \stV \sum_{s,e}} \indFn\{\fnomega_{\theta}^{s}(e) = v_{j}\}}.
\end{equation}
Note that the graph signals are normalized by the $1$-norm and can therefore be interpreted as the probability of finding the function word $v_{i}$ in text $\theta$.

\textbf{Dataset and code.} The dataset is presented in \cite{Segarra2015-Authorship}, and is publicly available at \url{http://github.com/alelab-upenn/graph-neural-networks/tree/master/datasets/authorshipData}. The dataset consists of $21$ authors from the $\text{19}^{\text{th}}$ century. The corpus of each author is split in texts of about $1000$ words. The WANs and the word frequency count for each text are already present in the dataset. The texts are split, at random, in $95\%$ for training and $5\%$ for testing. The weights of the WANs of the $95\%$ texts selected for training are averaged to obtain an average WAN from which the support matrix $\mtS$ is obtained by following \eqref{eq:app:authorSupport}. The resulting support matrix is further normalized to have unit spectral norm. Note that no text from the test set is used in building the WAN. From the texts in the training set, $8\%$ are further separated to build the validation set. Denote by $N_{a}^{\text{train}}$, $N_{a}^{\text{valid}}$, and $N_{a}^{\text{test}}$ the number of texts in the training, validation, and test set, respectively.  The word frequency counts for each text are normalized as in \eqref{eq:app:authorSignal} and used as graph signals. A label of $1$ is attached to these signals to indicate that they have been written by author $a$ in a supervised learning context. To complete the datasets, an equal number of texts are obtained at random from other contemporary authors, and their frequency word counts are normalized and incorporated into the corresponding sets and assigned a label of $0$. In this way, the resulting training, validation, and test set have $2N_{a}^{\text{train}}$, $2N_{a}^{\text{valid}}$, and $2N_{a}^{\text{test}}$ samples, respectively (half of them labeled with $1$ and the other half with $0$). The code to run the simulations will be provided as a .zip file.

\begin{figure*}
    \centering
    \includegraphics[width=0.9\textwidth]{figures/allAuthorsChangeGroups.pdf}
    \caption{Relative change in performance with respect to the GCNN baseline, divided into $3$ groups of similar behavior. Group 1 ($53\%$ of the authors) includes those where the Learn NVGF has a comparable performance to the GCNN and both of them are better than the LSIGF, showing that frequency creation plays a vital role in improving performance. Group 2 ($33\%$ of the authors) shows that oftentimes, the Learn NVGF can improve significantly over the GCNN, suggesting that the nonlinear nature of the mapping may have a negative impact. Group 3 ($14\%$ of the authors) consists of those cases when the Learn NVGF, the GCNN, and the LSIGF all exhibit comparable performance.}
    \label{fig:app:allAuthors:change}
\end{figure*}

\textbf{Architectures for comparison.} The Learn NVGF architecture in \eqref{eq:learnNVGF} is compared against three of the most popular GNN architectures in the literature, namely, the GCN \cite{Kipf2017-GCN}, the SGC \cite{Weinberger2019-SGC}, and the GAT \cite{Velickovic2018-GAT}. The Learn NVGF adopts the support matrix in \eqref{eq:app:authorSupport} and consists of a LSI graph filter (a graph convolution) that outputs $F_{\text{NVGF}}$ features (hidden units) and has $K_{\text{NVGF}}+1$ filter taps, followed by a NVGF that takes those $F_{\text{NVGF}}$ input features and applies a NVGF with $K_{\text{NVGF}}+1$ filter taps, and also outputs $F_{\text{NVGF}}$ features. The GCN considers $\mtS$ in \eqref{eq:app:authorSupport} to be the adjacency matrix of the graph and thus adopts a support matrix given by $\mtS_{\text{GCN}} = \mttD^{-1/2}(\mtI + \mtS) \mttD^{-1/2}$ where $\mttD = \diag((\mtI+\mtS) \vcOnes)$ as indicated in \cite{Kipf2017-GCN}. The graph convolutional layer consists of $F_{\text{GCN}}$ LSI graph filters, each one of the form $h^{f}\mtS_{\text{GCN}}$ for $f\in\{1,\ldots,F_{\text{GCN}}\}$. This is followed by a ReLU activation function. The SGC considers the same support matrix $\mtS_{\text{GCN}}$ as the GCN and learns $F_{\text{SGC}}$ output features, where each filter in the bank is of the form $h^{f} \mtS_{\text{GCN}}^{K_{\text{SGC}}}$ for a predetermined hyperparameter $K_{\text{SGC}}$. This is followed by a ReLU activation function. Finally, for the GAT, the support matrix is $\mtS$ (although this is relevant only in terms of the nonzero elements, which are the same in $\mtS$ and $\mtS_{\text{GCN}}$---except for the diagonal elements---since the weights of each edge are learned through the attention mechanism), and the output features are $F_{\text{GAT}}$ learned through the attention mechanism exactly as described in \cite{Velickovic2018-GAT}. All four architectures are followed by a readout layer consisting of a learnable linear transform that maps the $NF$ output features into a vector of size $2$ that is interpreted to be the logits for the two classes (either the text is written by the author or not).

\textbf{Architectures for analysis.} To analyze the impact of the nonlinearity, four architectures are considered. First, as a baseline, a GCNN consisting of a graph convolutional layer that outputs $F$ features, with $K+1$ filter taps, followed by a ReLU activation function, as indicated in \eqref{eq:GCNN}. Second, the Learn NVGF that replaces the ReLU activation function by a NVGF with filter taps that can be learned from data as in \eqref{eq:learnNVGF}. Third, the Design NVGF where the nonlinear activation function of the GCNN is replaced by a NVGF, but one whose filter taps are designed to mimic the ReLU (the LSI graph filters in the Design NVGF are the same ones learned by the GCNN). Fourth, a LSI graph filter with $K+1$ taps that outputs $F$ features. Note that the GCNN is a nonlinear architecture, while the other three are linear. Also, note that the first three architectures are capable of creating frequency content while the fourth one, the LSIGF, is not. For fair comparison, the values of $K$ and $F$ are the same for all architectures.

\textbf{Training.} The loss function during training is a cross-entropy loss between the logits obtained from the output of each architecture, and the labels in the training set. All the architectures are trained by using an ADAM optimizer \cite{Kingma15-ADAM} with forgetting factors $0.9$ and $0.999$, and with a learning rate $\eta$. The training is carried out for $25$ epochs with batches of size $20$. Dropout with probability $0.5$ is included before the readout layer, during training, to avoid overfitting. At testing time, the weights are correspondingly rescaled. Validation is run every $5$ training steps. The learned filters that result in the best performance on the validation set are kept and used during the testing phase. For each experiment, $10$ realizations of the random train/test split are carried out (also randomizing the selection of the texts by other authors that complete the training, validation, and test sets). The average evaluation performances (measured as classification error---ratio of texts wrongly attributed in the test set) is reported, together with the estimated standard deviation.

\textbf{Hyperparameter selection.} The number of hidden units $F_{\text{NVGF}}$, $F_{\text{GCN}}$, $F_{\text{SGC}}$, and $F_{\text{GAT}}$, the polynomial order $K_{\text{NVGF}}$ and $K_{\text{SGC}}$, and the learning rate $\eta_{\text{NVGF}}$, $\eta_{\text{GCN}}$, $\eta_{\text{SGC}}$, and $\eta_{\text{GAT}}$ are selected, independently for each architecture, from the set $\{16,32,64\}$ for the number of hidden units, $\{2, 3, 4\}$ for the polynomial order, and $\{0.001, 0.005, 0.01\}$ for the learning rate. In other words, all possible combinations of these three parameters are run for each architecture, and the ones that show the best performance on the test set are kept.

\begin{table*}
    \caption{Hyperparameters for each architecture that lead to the best performance}
    \label{tab:app:allAuthors:hParams}
    \centering
    \small
    \begin{tabular}{l|ccc|cc|ccc|cc}
        \toprule
        & \multicolumn{3}{c|}{Learn NVGF} & \multicolumn{2}{c|}{GCN \cite{Kipf2017-GCN}} & \multicolumn{3}{c|}{SGC \cite{Weinberger2019-SGC}} & \multicolumn{2}{c}{GAT \cite{Velickovic2018-GAT}} \\
        Authors & $\eta_{\text{NVGF}}$ & $F_{\text{NVGF}}$ & $K_{\text{NVGF}}$ & $\eta_{\text{GCN}}$ & $F_{\text{GCN}}$ & $\eta_{\text{SGC}}$ & $F_{\text{SGC}}$ & $K_{\text{SGC}}$ & $\eta_{\text{GAT}}$ & $F_{\text{GAT}}$ \\
        \midrule
        Abbott & 0.001 & 32 & 2 & 0.005 & 64 & 0.01 & 64 & 2 & 0.01 & 64 \\
        Alcott & 0.01 & 16 & 4 & 0.01 & 32 & 0.01 & 32 & 2 & 0.01 & 32 \\
        Alger  & 0.005 & 16 & 4 & 0.01 & 32 & 0.001 & 64 & 2 & 0.005 & 64 \\
        Allen & 0.005 & 64 & 2 & 0.01 & 64 & 0.005 & 32 & 2 & 0.005 & 64 \\
        Austen & 0.001 & 32 & 3 & 0.01 & 64 & 0.005 & 64 & 2 & 0.01 & 64\\
        Bront\"{e} & 0.001 & 32 & 3 & 0.01 & 64 & 0.005 & 16 & 2 & 0.01 & 16 \\
        Cooper & 0.005 & 16 & 3 & 0.005 & 64 & 0.005 & 64 & 2 & 0.01 & 64 \\
        Dickens & 0.001 & 32 & 4 & 0.005 & 32 & 0.005 & 64 & 2 & 0.01 & 64 \\
        Doyle & 0.005 & 16 & 3 & 0.005 & 16 & 0.005 & 64 & 2 & 0.005 & 32 \\
        Garland & 0.005 & 32 & 3 & 0.01 & 64 & 0.005 & 64 & 2 & 0.01 & 32 \\
        Hawthorne & 0.01 & 16 & 3 & 0.005 & 64 & 0.01 & 64 & 2 & 0.01 & 16 \\
        Irving & 0.005 & 16 & 3 & 0.005 & 64 & 0.01 & 64 & 2 & 0.005 & 64 \\
        James & 0.001 & 32 & 2 & 0.005 & 64 & 0.01 & 64 & 2 & 0.01 & 64 \\
        Jewett & 0.001 & 16 & 2 & 0.01 & 64 & 0.01 & 64 & 2 & 0.01 & 32 \\
        Melville & 0.005 & 16 & 4 & 0.01 & 64 & 0.005 & 64 & 2 & 0.005 & 64  \\
        Page & 0.005 & 16 & 2 & 0.005 & 64 & 0.005 & 64 & 2 & 0.005 & 64 \\
        Poe & 0005 & 32 & 4 & 0.005 & 64 & 0.005 & 32 & 2 & 0.01 & 32 \\
        Stevenson & 0.001 & 16 & 4 & 0.005 & 64 & 0.005 & 32 & 2 & 0.01 & 64 \\
        Thoreau & 0.005 & 64 & 3 & 0.01 & 64 & 0.01 & 16 & 2 & 0.01 & 32 \\
        Twain & 0.005 & 16 & 2 & 0.01 & 16 & 0.001 & 64 & 2 & 0.01 & 32 \\
        Wharton & 0.005 & 16 & 4 & 0.01 & 64 & 0.01 & 64 & 2 & 0.001 & 64 \\
        \bottomrule
    \end{tabular}
\end{table*}

\textbf{Experiment 1: Performance comparison.} In the first experiment, the performance is measured by classification error (ratio of texts in the test set that were wrongly attributed), and the comparison between the Learn NVGF, the GCN \cite{Kipf2017-GCN}, the SGC \cite{Weinberger2019-SGC}, and the GAT \cite{Velickovic2018-GAT} is carried out, for all $21$ authors. The hyperparameters used for each architecture and each author are present in Table~\ref{tab:app:allAuthors:hParams} (recall that the hyperparameters that exhibit the best performance---the lowest classification error on the test set---are the ones used).

Results are shown in Figure~\ref{fig:app:allAuthors:compare}. The general trend observed is for the Learn NVGF to exhibit better performance than the GAT, which in turn is better than the GCN, and all of them are better than SGC. The differences are usually significant between the four architectures, with a marked improvement by the Learn NVGF. It is observed, however, that for Doyle, Hawthorne, and Stevenson, the performance of the Learn NVGF and the GAT is comparable.

\textbf{Experiment 2: Impact of nonlinearities.} In the second experiment, the objective is to decouple the contribution made by frequency creation from that made by the nonlinear nature of the architecture. To do this, the GCNN architecture is taken as a baseline (a nonlinear, frequency-creating architecture), and the relative change in performance of the three other architectures is measured (Learn NVGF and Design NVGF, both linear and frequency-creating architectures, and LSIGF, which is linear but cannot create frequencies). The learning rate $\eta$, the number of hidden units $F$, and order of the filters $K$ are the same for all four architectures, and are those found in Table~\ref{tab:app:allAuthors:hParams} under the column of Learn NVGF. The results showing relative change in the mean classification error with respect to the GCNN are shown in Figure~\ref{fig:app:allAuthors:change}.

The authors have been classified into three groups according to their relative behavior. Group 1 consists of those authors where the Learn NVGF has a comparable performance with respect to the GCNN (i.e., $5\%$ or less relative variation in performance), and both have a considerably better performance than the LSIGF. These results suggest that, for this group of authors (the most numerous one, consisting of $11$ authors, or $53\%$ of the total), the improvement in performance is mostly due to the capability of the architectures to create frequency, and not necessarily due to the nonlinear nature of the GCNN.

Group 2 consists of those authors where the Learn NVGF exhibits a better performance than the GCNN, which in turn exhibits a similar performance to the LSIGF (except for Jewett and Thoreau, where the GCNN still exhibits considerably better performance than the LSIGF). This group of 7 authors ($33\%$ of the total) suggests that in some cases, frequency creation is the key contributor to improved performance, and that the inclusion of a nonlinear mapping could possibly have a negative impact. In essence, we observe that a linear architecture capable of creating frequencies outperforms the rest, and that a frequency-creating nonlinear architecture performs similarly to a linear architecture that cannot create frequencies. This suggests that the relationship between input and output is approximately linear with frequencies being created, but that attempting to model this frequency creation with a nonlinear architecture is not a good approach.

Finally, Group 3 consists of those architectures for which all architectures exhibit a similar performance. This group consists of $3$ authors ($14\%$ of the total). This may be explained by the fact that the high-frequency content for these authors does not carry useful information, and thus the role of frequency creation is less relevant.

With respect to the Design NVGF, a somewhat erratic behavior is observed. In near half of the cases ($11$ authors), the performance of the Design NVGF closely resembles the performance of the GCNN, as expected. In a few other cases (Alcott, James, Cooper, and Allen), the Design NVGF results are better, and in the rest they are considerably worse (Bront\"{e}, Dickens, Hawthorne, Page, Stevenson, and Melville). The Design NVGF architecture is designed to mimic the GCNN, but its accurate design depends on good estimates of the first and second moments of the data. Thus, one possible explanation is that there is not enough data to get good estimates of these values. Another possible explanation is that higher-order moments have a larger impact in these cases, and the NVGF, being linear, is not able to accurately capture them.

Overall, this second experiment shows the importance of frequency-creation in improving performance, especially when high-frequency content is significant. Among the two ways of creating frequency (linear and nonlinear), we see that in most cases, they essentially perform the same. But there are cases in which creating frequency content in a linear manner is better (Group 2). In any case, this experiment shows the importance of frequency creation and calls for further research on what other contributions the nonlinear nature of the activation function has on performance.


\section{Movie Recommendation}

\begin{figure}
    \centering
    \includegraphics[width=0.9\columnwidth]{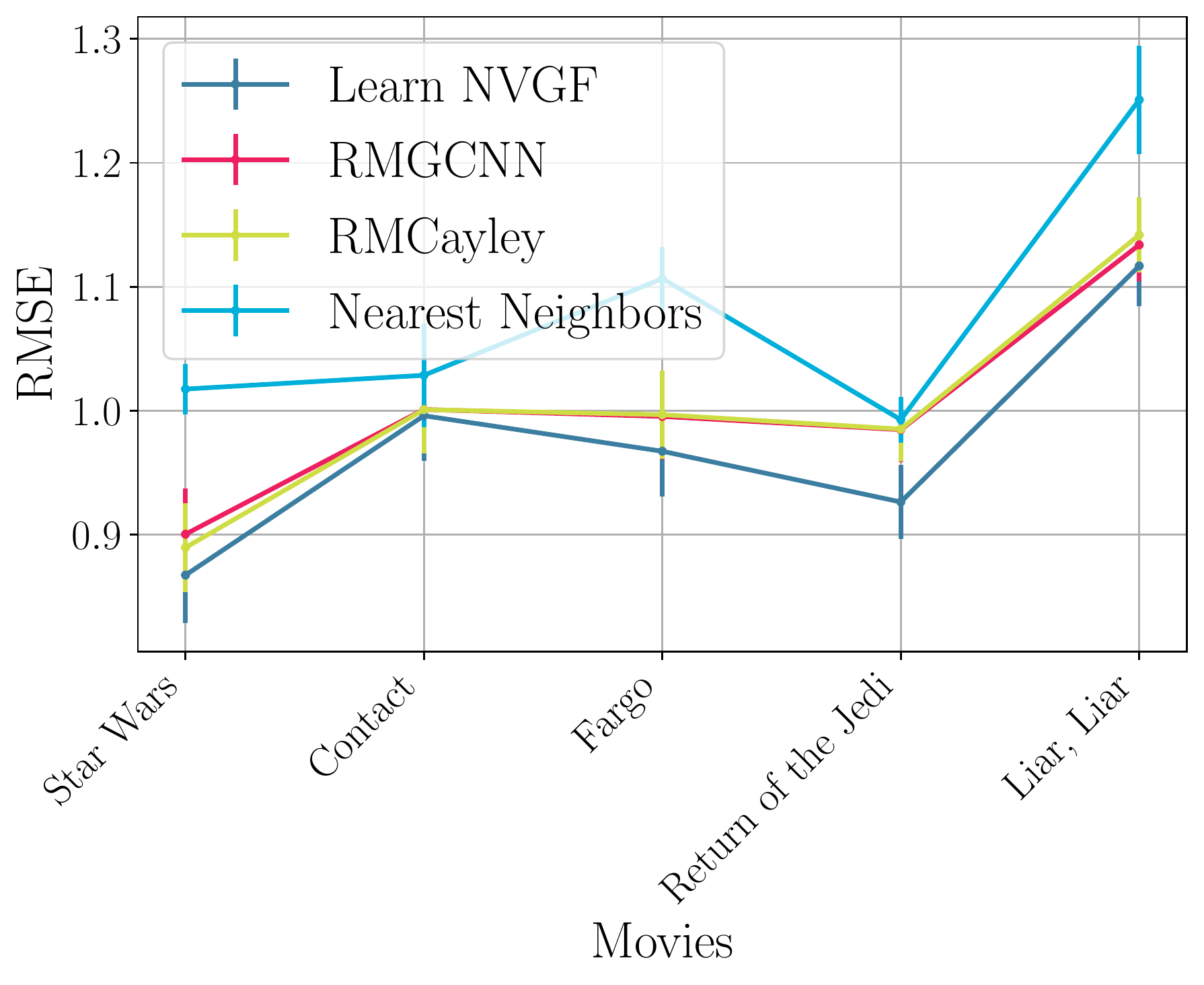}
    \caption{Performance comparison}
    \label{fig:app:allMovies:compare}
\end{figure}
\begin{figure}
    \centering
    \includegraphics[width=0.9\columnwidth]{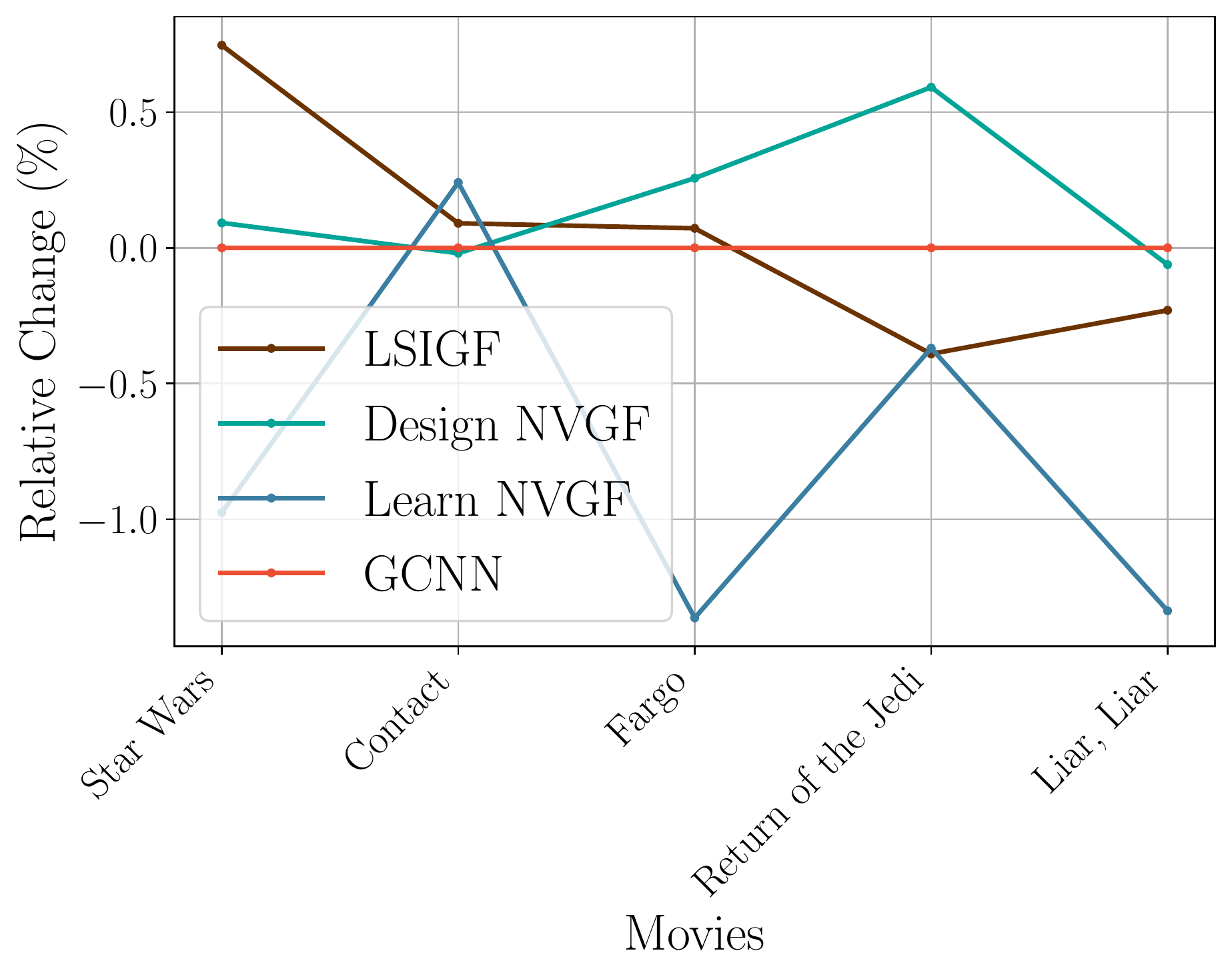}
    \caption{Relative change}
    \label{fig:app:allMovies:change}
\end{figure}

\textbf{Problem objective.} Consider a set $\stV = \{v_{1},\ldots,v_{N}\}$ of $N$ items, and let $\fntx_{t}: \stV \to \fdR \cup \{ \emptyset\}$ be the ratings assigned by user $t$ to these items, i.e., $\fntx_{t}(v_{i})$ is the rating assigned by user $t$ to item $i$, this function yields $\emptyset$ if the item has not been rated. The objective is to estimate what rating $\fntx_{t}(\lmv_{r})$ a user would give to some target item $\lmv_{r} \in \stV$ not yet rated \cite{Monti2017-RecommendationGNN, Levie2018-CayleyNets}.

\textbf{Approach.} The idea is to create a graph $\stG$ of rating similarities and take the graph signal $\vcx$ to be the ratings given by a user to some of the items. Then $\vcx$ is processed through a GNN $\fnPhi$ (in any of the variants discussed in Section~\ref{sec:archit}) to obtain the estimated rating the user would give to the target item $\lmv_{r}$, by looking at the output value of the GNN on node $\lmv_{r}$. In short, this amounts to an interpolation problem.

\textbf{Graph construction.} Let $\sttT = \{\vctx_{t}\}_{t}$ be a set where $\vctx_{t} \in \fdR^{N}$ collects the ratings given by user $t$ to some of the items, such that $[\vctx_{t}]_{i} = \fntx_{t} (\lmv_{i}) \in \fdR$ if item $\lmv_{i}$ has been rated and $[\vctx_{t}]_{i} = 0$ if $\fntx_{t}(\lmv_{i}) = \emptyset$. Denote by $\stT_{i} = \{\vcx_{t} \in \sttT : [\vcx_{t}]_{i} > 0\}$ the set of users that have rated item $\lmv_{t}$, and by $\sttT_{ij} = \sttT_{i} \cap \sttT_{j}$ the set of users that have rated both items $\lmv_{i}$ and $\lmv_{j}$. Define the mean intersection score as $\mu_{ij} = |\sttT_{ij}|^{-1} \sum_{\vctx_{t} \in \sttT_{ij}} [\vctx_{t}]_{i}$. Note that this is the rating average for item $\lmv_{i}$, computed among those users that have rated both $\lmv_{i}$ and $\lmv_{j}$. Then, the rating similarity between items $\lmv_{i}$ and $\lmv_{j}$ is computed by means of the Pearson correlation as
\begin{equation}
      w_{ij}  = \frac{1}{|\sttT_{ij}|} \sum_{\vctx_{t} \in \sttT_{ij}} \big( [\vctx_{t}]_{i} - \mu_{ij} \big) \big( [\vctx_{t}]_{j} - \mu_{ij} \big).
\end{equation}
These weights can be used to build a complete graph $\sttG = (\stV, \sttE)$ where $\stV$ is the set of items and $\sttE = \stV \times \stV$ is the complete set of edges with $w_{ij}$ being the corresponding weights. In what follows, the $10$-nearest neighbor graph $\stG = (\stV, \stE)$ of $\sttG$ is built with $\stE \subseteq \sttE$, and the matrix $\mtW$ is used to denote the weighted adjacency matrix of $\stG$, such that $[\mtW]_{ij} = w_{ij}$ if $(i,j) \in \stE$, and $[\mtW]_{ij}=0$ otherwise.

\textbf{Graph signal processing description.} The support matrix is chosen to be
\begin{equation} \label{eq:app:movieSupport}
    \mtS = \big( \diag(\mtW) \big)^{-1/2} \mtW \big( \diag(\mtW) \big)^{-1/2} - \mtI.
\end{equation}
This problem can be cast as a supervised interpolation problem. Given the set $\sttT$ and the target item $\lmv_{r}$, consider the set $\sttT_{r}$ of all users that have rated the item $\lmv_{r}$. Extract the specific rating $\fntx_{t}(\lmv_{r}) = y_{t}$ as a label, and set a $0$ in the $r^{\text{th}}$ entry of $\vctx_{t}$. The resulting vector is a graph signal $\vcx_{t}$ which always has a $0$ in the $r^{\text{th}}$ entry. The resulting set $\stT_{r} = \{(\vcx_{t},y_{t}) : \vctx_{t} \in \sttT_{r}\}$ contains all the users that have rated the item $\lmv_{r}$ with the corresopnding rating extracted as a label $y_{t}$ and the $r^{\text{th}}$ entry $[\vcx_{t}]_{r}$ of the graph signal $\vcx_{t}$ set to zero, i.e., $[\vcx_{t}]_{r} = 0$ for all $t$ such that $\vctx_{t} \in \sttT_{r}$.

\begin{figure}
    \centering
    \includegraphics[width=0.9\columnwidth]{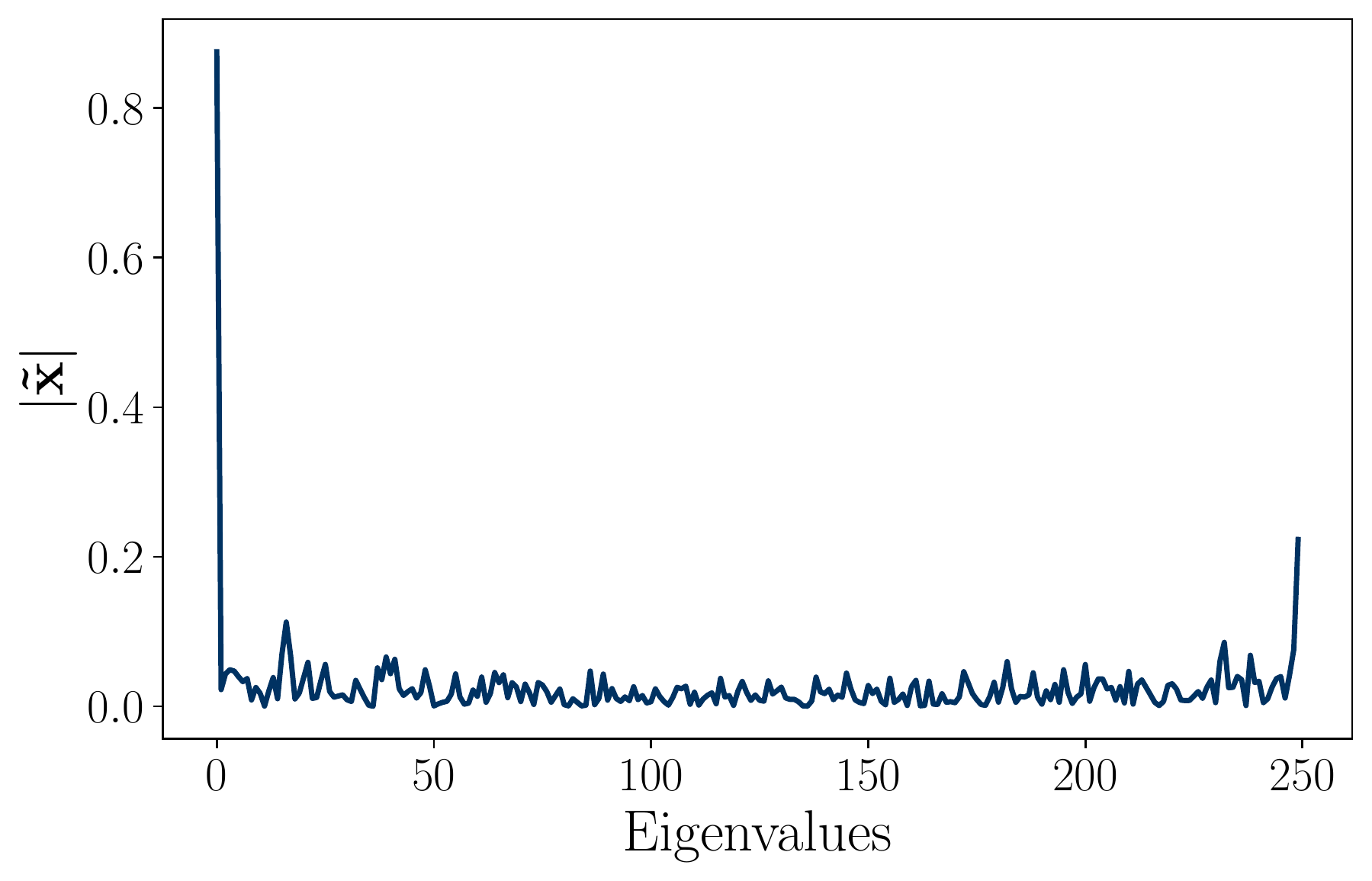}
    \caption{Input frequencies}
    \label{fig:app:starWars:inputFreq}
\end{figure}
\begin{figure}
    \centering
    \includegraphics[width=0.9\columnwidth]{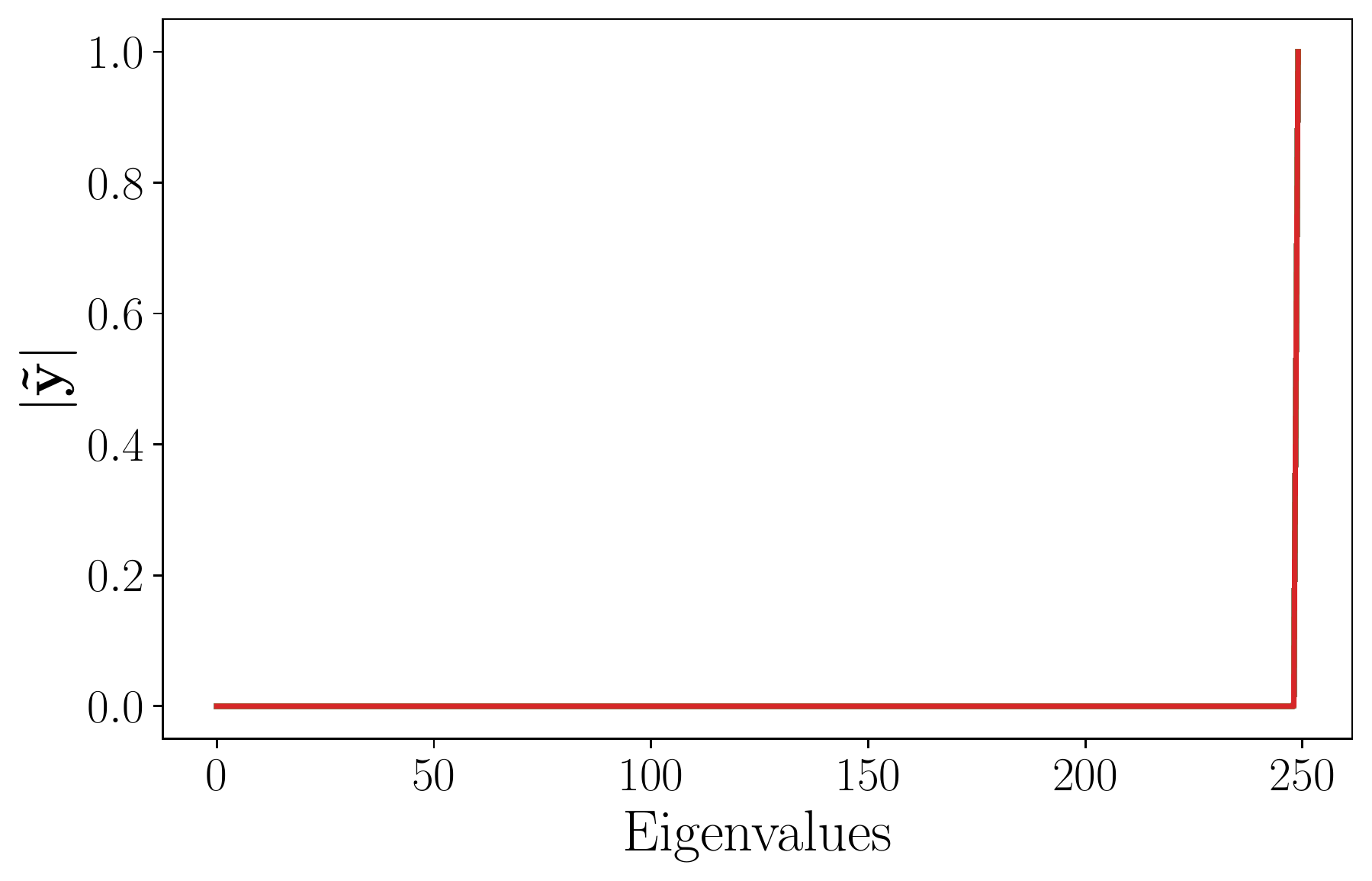}
    \caption{Output frequency of LSIGF}
    \label{fig:app:starWars:outputFreqLSIGF}
\end{figure}

\textbf{Dataset and code.} The dataset is the MovieLens-100k dataset \cite{Harper2016-MovieLens} publicly available at \url{http://files.grouplens.org/datasets/movielens/ml-100k.zip}. This dataset consists of one hundred thousand ratings given by $943$ users to $1682$ movies, and where each user has rated at least $20$ of them. The ratings are integers ranging from $1$ to $5$ meaning that $\fntx_{t} : \stV \to \{1,\ldots,5\} \cup \emptyset$ for every user $t$. In particular, the subset of $250$ movies that have received the largest number of ratings is used to build a graph with $N=250$ nodes. The resulting dataset has $54746$ ratings given by $943$ users to some of these $250$ movies. The five movies with the largest number of ratings are considered as target movies, namely ``Star Wars'' with $|\sttT_{\text{Star Wars}}| = 583$ pairs $(\vcx_{t},y_{t})$, ``Contact'' with $509$ pairs, ``Fargo'' with $508$, ``Return of the Jedi'' with $507$, and ``Liar, Liar'' with $485$. Each of these datasets is split randomly into $81\%$ for training, $9\%$ for validation, and $10\%$ for testing. The code to run the simulations will be provided as a .zip file.

\textbf{Architectures for comparison.} The Learn NVGF architecture in \eqref{eq:learnNVGF} is compared against two graph signal processing based methods for movie recommendation, namely the RMGCNN in \cite{Monti2017-RecommendationGNN} and the RMCayley in \cite{Levie2018-CayleyNets}, which are implemented exactly as in the corresponding papers, with the same values for the hyperparameters. For the Learn NVGF, the values are $F$ hidden units and filters of order $K$. The values used are $(64,3)$ for Star Wars and Contact, $(16,4)$ for Fargo, $(32,4)$ for Return of the Jedi, and $(16,3)$ for Liar, Liar. The support matrix is given by \eqref{eq:app:movieSupport}. All architectures are followed by a learnable local linear transformation (the same for all nodes, i.e., a LSI graph filter with $K=0$) that takes the value of the $F$ hidden units and outputs a single scalar that represents the estimated rating for that movie. A comparison with the nearest neighbor method (i.e., averaging the ratings of the nearest nodes) is also included.

\textbf{Architectures for analysis.} The architectures for analyzing the role of frequency creation are the same four architectures that in the authorship attribution problem. Namely, the LSI graph filter as a linear architecture unable to create frequencies, the Design NVGF and the Learn NVGF are linear frequency-creating architectures, and the GCNN with a ReLU nonlinearity is a nonlinear frequency-creating architecture. The values of $F$ and $K$ in all cases are the same.

\textbf{Training.} The loss function during training is the ``Smooth L1'' loss between the output scalar at the target node and the labels in the training set. All architectures are trained by using an ADAM optimizer \cite{Kingma15-ADAM} with forgetting factors $0.9$ and $0.999$, and a learning rate $\eta$. The training is carried out for $40$ epochs with batches of size $5$. Validation is run every $5$ training steps. The learned filters that result in the best performance on the validation set are kept and used during the testing phase. For each experiment, $10$ realizations of the random dataset split are carried out. The average evaluation performances (measured by RMSE) is reported, together with the estimated standard deviation.

\textbf{Experiment 1: Performance comparison.} In the first experiment, the performance is measured by the RMSE, and the comparison between the Learn NVGF, the RMGCNN \cite{Monti2017-RecommendationGNN}, the RMCayley \cite{Levie2018-CayleyNets}, and the Nearest Neighbor approach is carried out for the $5$ aforementioned movies with the most number of ratings. Results are shown in Figure~\ref{fig:app:allMovies:compare}. It is generally observed that the Learn NVGF performs better than the alternatives, although the performance is comparable to the RMGCNN and the RMCayley in the case of the movie Contact. The nearest neighbor approach yields worse performance.

\textbf{Experiment 2: Impact of nonlinearities.} In this second experiment, the objective is to decouple the contribution made by frequency creation from that made by the nonlinear nature of the architecture. To do this, the GCNN architecture is taken as a baseline (a nonlinear, frequency-creating architecture), and the relative change in performance of the three other architectures is measured. The results shown in Figure~\ref{fig:app:allMovies:change} show that the relative change is quite small (approximately $1.5\%$ change in the highest case, the movies Fargo and Liar, Liar), which implies that all four architectures have relatively similar performance. This can be easily explained by computing the average frequency response of the signals in the test set of the movie Star Wars. The result is shown in Figure~\ref{fig:app:starWars:inputFreq}. It is observed that it is a signal with low-eigenvalue frequency content. Therefore, as expected, there is not much to gain for using architectures that create frequencies. In Figures~\ref{fig:app:starWars:outputFreqLSIGF}, \ref{fig:app:starWars:outputFreqGCNN}, and \ref{fig:app:starWars:outputFreqLearn}, the frequency responses of the output of each architecture to an input that is equal to the largest eigenvector, i.e., $\vcx = \vcv_{N}$, are shown. As expected, the LSIGF (Figure~\ref{fig:app:starWars:outputFreqLSIGF}) does not create frequency content, while the other two architectures, do (Figures~\ref{fig:app:starWars:outputFreqGCNN} and \ref{fig:app:starWars:outputFreqLearn}). However, since this high-eigenvalue frequency content is not really significant, the frequency creation capabilities do not markedly improve the performance.

\begin{figure}
    \centering
    \includegraphics[width=0.9\columnwidth]{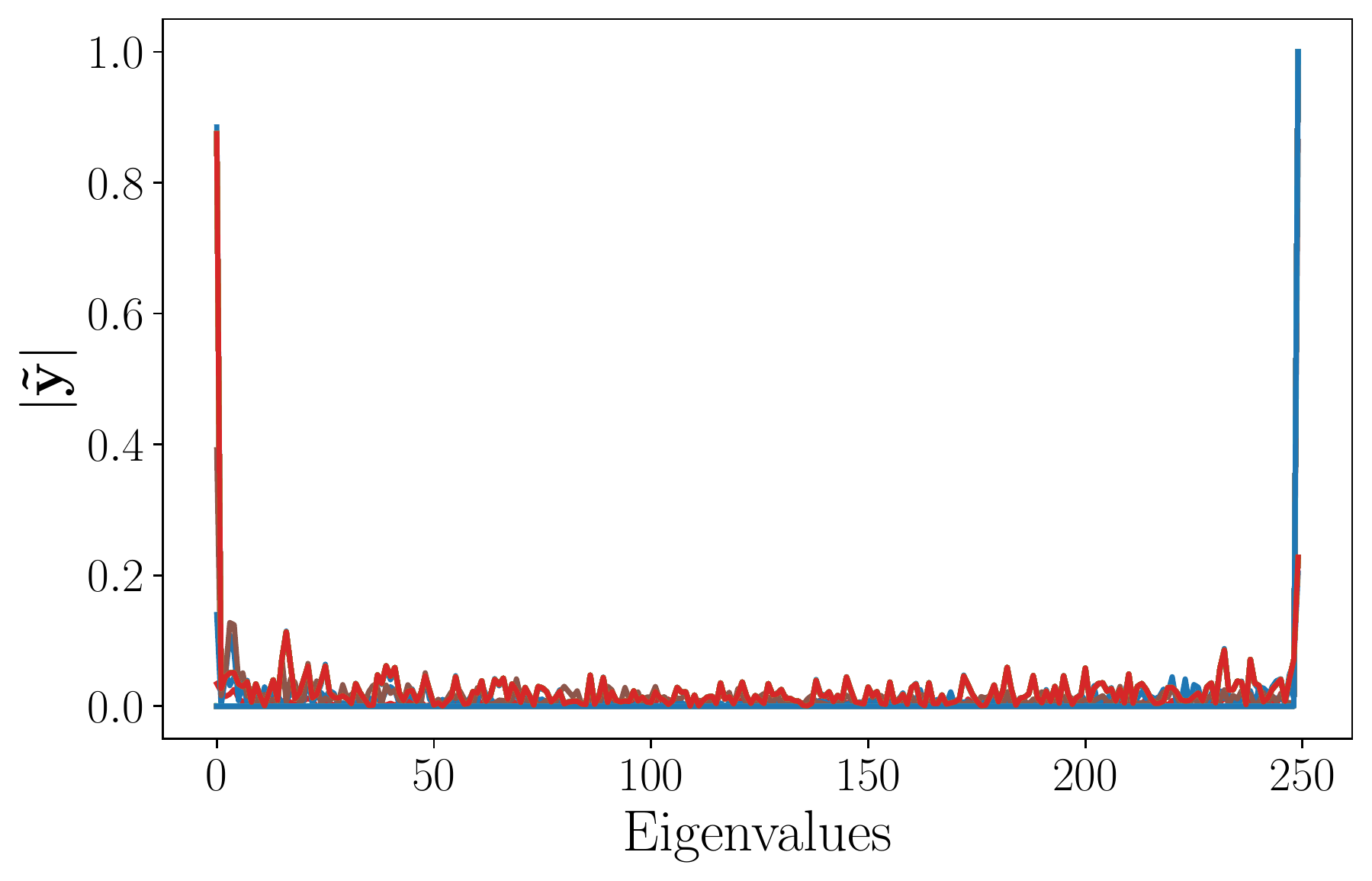}
    \caption{Output frequency of GCNN}
    \label{fig:app:starWars:outputFreqGCNN}
\end{figure}
\hfill
\begin{figure}
    \centering
    \includegraphics[width=0.9\columnwidth]{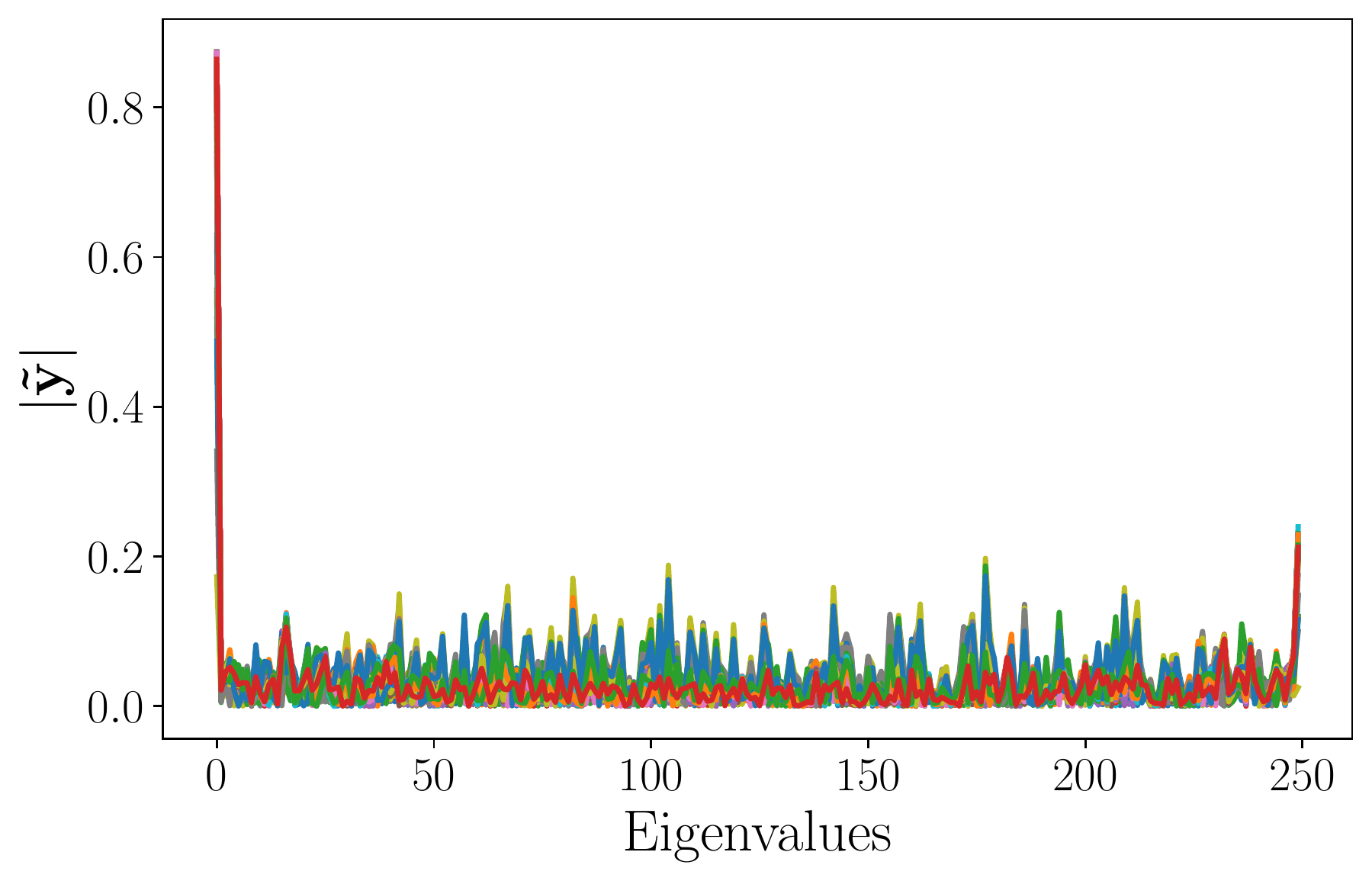}
    \caption{Output frequency of Learn NVGF}
    \label{fig:app:starWars:outputFreqLearn}
\end{figure}


\fi

\end{document}